\newcommand{\tbf}{\textbf}
\newcommand{\R}{\mathbb{R}}
\newcommand{\Z}{\mathbb{Z}}
\newcommand{\I}{\mathbb{I}}
\newcommand{\E}{\mathbb{E}}
\newcommand{\ep}{\varepsilon}
\newcommand{\la}{\lambda}
\newcommand{\La}{\Lambda}
\newcommand{\ph}{\varphi}
\newcommand{\Ga}{\Gamma}
\newcommand{\al}{\alpha}
\newcommand{\tht}{\theta}
\newcommand{\Om}{\Omega}
\newcommand{\sig}{\sigma} 
\newcommand{\Sig}{\Sigma}
\newcommand{\beq}{\begin{equation}}
\newcommand{\eeq}{\end{equation}}
\newcommand{\cN}{\mathcal{N}}
\newcommand{\cD}{\mathcal{D}}
\newcommand{\rb}{\rbrace} 
\newcommand{\lb}{\lbrace}
\newcommand{\lan}{\langle} 
\newcommand{\ran}{\rangle}
\newcommand{\ol}{\overline} 
\newcommand{\ul}{\underline}
\newcommand{\ot}{\otimes}
\newcommand{\sbs}{\subseteq}
\newcommand{\tit}{\textit}
\newtheorem{thm}{Theorem}
\newtheorem{prop}{Proposition}
\newtheorem{lemma}{Lemma}
\newtheorem{defn}{Definition}
\newtheorem{coro}{Corollary}
\newtheorem{assm}{Assumption}
\theoremstyle{remark}
\newtheorem*{remark}{Remark}
\title{Learning Linear Symmetries in Data Using Moment Matching}
\author{Colin Hagemeyer}
\begin{document}

\maketitle

\abstract{It is common in machine learning and statistics to use symmetries derived from expert knowledge to simplify problems or improve performance, using methods like data augmentation or penalties. In this paper we consider the unsupervised and semi-supervised problems of learning such symmetries in a distribution directly from data in a model-free fashion. We show that in the worst case this problem is as difficult as the graph automorphism problem. However, if we restrict to the case where the covariance matrix has unique eigenvalues, then the eigenvectors will also be eigenvectors of the symmetry transformation. If we further restrict to finding orthogonal symmetries, then the eigenvalues will be either be $1$ or $-1$, and the problem reduces to determining which eigenvectors are which. We develop and compare theoretically and empirically the effectiveness of different methods of selecting which eigenvectors should have eigenvalue $-1$ in the symmetry transformation, and discuss how to extend this approach to non-orthogonal cases where we have labels}

\section{Introduction}
Symmetries exist throughout mathematics and the sciences. They sometimes appear explicitly in objects or data points, but even more commonly in the laws and distributions that produce those objects or data points. For example, in classical mechanics, the laws of motion are invariant under rotations and translations of space. It doesn't matter which direction you call ``up" and ``down:" the laws of physics remain the same. These symmetries can be exploited in various ways to make solving problems easier, such as producing conservation laws in physics.

Much research has been done in using these symmetries in machine learning. Previous related research has tended to focus on one or more specific goals:

\begin{enumerate}
\item Exploiting known symmetries in the data. Examples include Group-Equivariant Convolutional Networks \citep{cw:equivariant}, data augmentation methods such as image rotations, and even standard Convolution Neural Networks which implicitly use translational symmetry.
\item Implicitly learning (local) continuous symmetries by learning representations, such as variational autoencoders, and disentangled representation learning.
\item Using known symmetries in the underlying space which may not strictly hold for the data. Using harmonic analysis one can utilize the prior that the desired distributions will tend to be more symmetric. This approach is taken in \citep{diaconis:stats}\citep{kondor:group}.
\item Finding symmetries in the data \tit{points} themselves, such as on abstract grids like \citep{skh:perceptron}.
\item Learning identity preserving transformations using identity labels such as with \citep{slywt:faceid-gan} or giving the model pairs equivalent data points such as in \citep{Nima:lie_algebra}.
\item Learning symmetries using the associated differential equations as in the recent preprint \citep{lt:hidden}.
\end{enumerate}

In our approach, we will attempt to learn symmetries in data, similar to the goal 2, but we will look to solve the potentially more difficult problem of learning discrete symmetries. By this we mean that there is no way of continuously deforming the identity map into this transformation, in such a way that each subsequent deformation is still a symmetry. For example, the horizontal reflection of a face should still look like a face. This is a discrete symmetry, which we can see because handedness is binary. On the other hand, rotation of a face is still a face, and we can continuously deform a 90 degree rotation to a 0 degree rotation by just setting the degrees to be $d(t) = 90(1-t)$.

One of the key motivations of this question is that many symmetries used in data augmentation are discrete. This is especially the case when moving outside the the realm of images, such as in natural language processing where there is no way to continuously deform a sentence. In particular, this leads to the potential for a novel approach to semi-supervised learning, where a symmetries could be extracted from a large unlabeled dataset, and then these symmetries used to improve the performance of supervised learning on a smaller labeled dataset using data augmentation or other methods. 

\section{Basic Definitions}
First we start by defining formally what we mean by a symmetry in a distribution. To start with, by a probability space we will always mean a complete, Hausdorff space together with a probability measure on the Borel $\sigma$-algebra. We'll typically abuse notation and use $X$ to refer to both the probability space and the underlying topological space. If the reader is unfamiliar with this terminology, just consider the case of a probability distribution on $\R^n$, and $\mu(Z) = \int_Z p(x)dx$ is the probability of the event $Z$.

\begin{defn}
Let $X$ be a probability space, and let $f: X \to X$ be a continuous map. We say $X$ is \tbf{$f$-invariant} if $\mu(f^{-1}Z) = \mu(Z)$ for all measurable $Z \sbs X$. We will also say $f$ is a \ul{symmetry} of $X$.
\end{defn}

Typically we'll assume the distribution has a continuous PDF $p_\mu$ and $f$ is one-to-one and volume preserving, which makes this equivalent to $p_\mu(x) = p_\mu(f(x))$. We can also easily extended this definition to a full groups of symmetries:

\begin{defn}
Let $X$ be a probability space, and $G$ be a group acting on $X$ by continuous maps. We say $X$ is \tbf{$G$-invariant} if $\mu(g^{-1} \cdot Z) = \mu(Z)$ for all $g \in G$ and measurable $Z \sbs X$. We will say $G$ is a \tbf{symmetry group} of $X$.
\end{defn}

In this paper, we will focus on the case where $X$ is also a finite dimensional vector space. In this case, we want to further impose that any symmetry or symmetry-group action is linear, and in general we will use a more restrictive definition of linear as defined below.

\begin{defn}
Let $X$ be probability space which is also a finite dimensional vector space.
A \tbf{linear symmetry} is a symmetry $f$ such that $f$ is a linear transformation.
An \tbf{affine symmetry} is any symmetry of the form $f(x) = Ax + b$ where $A$ is linear.
\end{defn}

The distinction between linear and affine symmetries is important. In the remainder of this paper, we will assume that any symmetries are linear in this more specific sense unless otherwise specified.

\section{Intractability in the Worst Case}
Before we start discussing how to solve find symmetries in data, let's start by determining how hard it can be. Unfortunately, even if we know the probability distribution explicitly, the answer is that it is very hard in general.

\begin{prop}
\label{intractibility}
The graph automorphism problem on $n$ vertices can be reduced to the decision problem of determining whether a non-trivial linear symmetry exists for a probability measure on $\R^n$.
\end{prop}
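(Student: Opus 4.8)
The plan is to reduce, in polynomial time, an instance $G$ of the graph automorphism problem (a graph on vertex set $\{1,\dots,n\}$) to a probability measure $\mu_G$ on $\R^n$ whose linear symmetries are exactly the coordinate-permutation matrices $P_\pi$ (with $P_\pi e_i = e_{\pi(i)}$) for $\pi \in \Aut(G)$. Given this, $G$ has a non-trivial automorphism if and only if $\mu_G$ has a non-trivial linear symmetry, which is the desired reduction. To build $\mu_G$, let $e_1,\dots,e_n$ be the standard basis of $\R^n$, place a point mass of weight $p$ at each \emph{vertex point} $e_i$, and place a point mass of weight $q$ at each \emph{edge point} $e_i+e_j$ for $\{i,j\}\in E(G)$, where $p\ne q$ and $np+|E(G)|\,q=1$ (when $E(G)=\emptyset$, take $\mu_G=\frac1n\sum_i\delta_{e_i}$). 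All coordinates and weights are rational and of polynomial bit-size, so $\mu_G$ is computable from $G$ in polynomial time.

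The core of the argument is to show that the linear symmetries of $\mu_G$ are precisely the $P_\pi$ with $\pi\in\Aut(G)$. First I would record that a linear, $\mu_G$-invariant map $f$ (i.e.\ one with $f_*\mu_G=\mu_G$) must permute the atoms of $\mu_G$ and preserve their weights: since $\mu_G$ is a finite sum of point masses, the atoms of $f_*\mu_G$ are the $f$-images of the atoms of $\mu_G$ (with weights merged where images coincide), and equality with $\mu_G$ forces $f$ to be injective on the atom set and to restrict to a weight-preserving bijection of it. Because $p\ne q$, this bijection carries $\{e_1,\dots,e_n\}$ into itself; since these vectors form a basis, $f$ is thereby determined completely, so $f=P_\pi$ for a unique permutation $\pi$ of $\{1,\dots,n\}$. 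The same bijection carries edge points to edge points, so $f(e_i+e_j)=e_{\pi(i)}+e_{\pi(j)}$ being an edge point shows $\{\pi(i),\pi(j)\}\in E(G)$ whenever $\{i,j\}\in E(G)$; injectivity of $\pi$ together with finiteness of $E(G)$ upgrades this to a bijection of $E(G)$, so $\pi\in\Aut(G)$. Conversely, every $\pi\in\Aut(G)$ yields a permutation matrix $P_\pi$ that permutes the vertex points and the edge points while preserving weights, hence is a linear symmetry of $\mu_G$. This gives a bijection between $\Aut(G)$ and the linear-symmetry group of $\mu_G$ matching the identity with the identity, which completes the reduction.

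The step I expect to require the most care is excluding ``spurious'' linear symmetries --- rotations, shears, or coordinate-mixing rescalings --- and this is exactly what placing point masses on the linearly independent set $\{e_i\}$ is designed to rule out: invariance of an atomic measure is rigid enough to force $f$ to permute atoms, and a linear map permuting a basis has no remaining freedom. If one prefers a measure with a continuous density, as elsewhere in the paper, the same scheme works with each atom replaced by a narrow isotropic Gaussian bump, giving vertex-bumps and edge-bumps distinct total masses so that a symmetry --- which must permute the bumps and preserve each bump's mass --- still cannot interchange the two types; one then only needs the bumps narrow and separated enough that their centers are recovered as the local maxima of the density, which is routine.
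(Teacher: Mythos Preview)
Your proposal is correct and follows essentially the same construction as the paper: place distinct-weight point masses at the standard basis vectors (vertex points) and at the sums $e_i+e_j$ for edges (edge points), then argue that any linear symmetry must permute atoms within each weight class, hence is a permutation matrix coming from a graph automorphism. The paper uses the specific weights $1$ and $2$ (before normalization) where you use generic $p\neq q$, and your write-up is somewhat more careful about injectivity, the converse direction, and polynomial-time encodability, but the argument is the same.
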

\begin{proof}
To prove this, we will encode the graph automorphism problem into the problem of finding a non-trivial linear symmetry of a vector space. Let $\Ga$ a graphs  with $n$ vertices. For simplicity, pick an arbitrary label of the vertices as $\lb i \rb_{i=1}^{n}$. Let $X = \R^{n}$. We define a probability measure $\mu$ on $X$ encoding the graph.

 $\mu = \frac{\sum\limits_{i=1}^{n} \delta_{e_i} + \sum\limits_{(i,j) \in edges(\Ga)} 2 * \delta_{e_i + e_j}}{n + 2 |edges(\Ga)|}$. Where if $v$ is a vector, $\delta_{v}$ is the Dirac delta measure on $v$.

Since the span of points with non-zero measure is all of $X$, we know that any linear symmetry must be surjective, and hence also injective. This also forces points with non-zero measure to be sent to points with equal measure. Therefore, any non-trival linear symmetry will send unit directions to unit directions, thus corresponds to an permutation of the graph vertices. Moreover, if $(i,j) \in edges(\Ga)$ the symmetry must send $e_i + e_j$ to $e_k + e_\ell$, where $(k, \ell)$ is some other edge in $\Ga$. Therefore, the above permutation of graph vertices also respects edges, and hence is an automorphism.
\end{proof}

Note: this problem is not unique to discrete measures. One could find a similar encoding by swapping out the Dirac delta measures with suitable continuous measures.

\section{Symmetries in Distributions}
To start looking for symmetries in a set of data $\cD$, we will want to find statistics which will reflect the overall symmetries in the data. The most direct approach would be to use $\cD$ to construct an approximation $\hat{q}$ of a distribution $q$, and then look for symmetries in $\hat{q}$. Unfortunately, finding a sufficiently precise approximation of a distribution in large dimensions is often infeasible unless you are in a very restrictive setting and have a lot of data. 

Instead, we will look for statistics which summarize the data to a lesser extent, but will still possess the symmetry from the general data. The most straightforward example would be the cumulants. Many linear transformations can fix the mean, so let's consider the first two cumulants: the mean and covariance matrix. For the remainder of this section, we will consider the problem of identifying symmetries in the full distribution, and then later we will discuss how this translates to a sample. In particular, we will walk a fine line of only relying on quantities that can be approximated with a sample.

\subsection{Identifying Symmetries in the Covariance Matrix}

\begin{prop}
If $q$ is a distribution with linear symmetry $A$ then:
\begin{enumerate}
\item $A(\mu) = \mu$ for $\mu = \E_q[X]$
\item $A \Sig A^T = \Sig$ where $\Sig$ is the covariance matrix of $q$
\end{enumerate}
\end{prop}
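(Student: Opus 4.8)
The plan is to recast the measure-theoretic invariance condition as a statement about the pushforward of $q$ under $A$ and then read off both claims from linearity of expectation. First I would unpack the definition of symmetry: $A$-invariance of $q$ means $\mu(A^{-1}Z) = \mu(Z)$ for every measurable $Z$, and since the left-hand side is by definition the pushforward measure $(A_*\mu)(Z)$, this says precisely $A_*\mu = \mu$. In probabilistic language, if $X \sim q$ then $AX$ has the same distribution as $X$. Implicit here is that $q$ has finite second moments, which is in any case needed for $\Sig$ to be defined; I would state this as a standing assumption.

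Given that $AX$ and $X$ are equal in distribution, the first claim is immediate: $\mu = \E_q[X] = \E_q[AX] = A\,\E_q[X] = A\mu$, where the second equality is equality in distribution and the third is linearity of expectation applied to the fixed linear map $A$. For the second claim I would compute the covariance of $AX$ directly, $\operatorname{Cov}(AX) = \E_q\!\big[(AX - A\mu)(AX - A\mu)^T\big] = A\,\E_q\!\big[(X-\mu)(X-\mu)^T\big]\,A^T = A\Sig A^T$, again pulling the constant matrix $A$ out of the expectation on both sides. On the other hand, since $AX$ and $X$ have the same distribution they have the same covariance, so $\operatorname{Cov}(AX) = \Sig$; equating the two expressions yields $A\Sig A^T = \Sig$.

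The argument is essentially bookkeeping, so there is no substantive obstacle; the only points requiring care are (i) getting the direction of the pushforward right, since the invariance condition involves $A^{-1}Z$ it is $A_*\mu$, not $(A^{-1})_*\mu$, that equals $\mu$, and (ii) ensuring the relevant integrals are finite so that $\E_q[AX] = A\,\E_q[X]$ and the covariance manipulation are legitimate, both of which hold under the standing second-moment assumption. One could alternatively prove the two moment identities by a change of variables directly in the defining integrals $\int x\,d\mu$ and $\int x x^T\,d\mu$, but the pushforward / equality-in-distribution route is cleaner and makes the role of linearity transparent.
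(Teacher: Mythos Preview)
Your proposal is correct and is exactly the ``linearity of expectation plus straightforward calculation'' that the paper's one-line proof gestures at; you have simply filled in the details. The paper offers no alternative route, so there is nothing to contrast.
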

\begin{proof}
Follows from the linearity of expected values, and a straightforward calculation.
\end{proof}

Since the sample mean and covariance will approach the distribution mean and covariance, these symmetries should hold approximately for the sample statistics.

We now wish to make the second main restriction in this paper. 
\begin{assm}
\label{normal}
$A$ is a normal, linear symmetry of finite order.
\end{assm}

This is actually more restrictive. Since we assume the symmetry has finite order, the eigenvalues must be roots of unity, and hence the symmetry will be orthogonal. Since common known symmetries like image flips and rotations are orthogonal, this is still a large class of symmetries. Moreover, we still haven't escaped the setting where the intractibility of proposition \ref{intractibility} applies. 

However, the results of this assumption is that the action on the covariance matrix becomes an action by conjugation, ie $A \Sig A^T = A \Sig A^{-1} =  \Sig$. This means that $A$ and $\Sig$ must commute. Therefore, $A$ must send each eigenspaces of $\Sig$ onto itself. 

\begin{prop}
\label{involution}
Let assumption \ref{normal} be true. If $\Sig$ exists and has distinct eigenvalues, then $A$ is an involution.
\end{prop}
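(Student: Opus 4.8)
The plan is to exploit the fact, already established in the discussion preceding the statement, that $A$ commutes with $\Sig$, hence preserves each eigenspace of $\Sig$. Since $\Sig$ has distinct eigenvalues, each eigenspace is one-dimensional. So for each eigenvector $v_i$ of $\Sig$ we have $A v_i = \lambda_i v_i$ for some scalar $\lambda_i$; that is, the $v_i$ are simultaneously eigenvectors of $A$. This reduces the problem to pinning down the possible values of the $\lambda_i$.

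Next I would bring in Assumption~\ref{normal}: $A$ is an orthogonal transformation of finite order. Orthogonality forces $|\lambda_i| = 1$ for every $i$ (an orthogonal map preserves norms, and $\|A v_i\| = |\lambda_i|\,\|v_i\|$). Combined with the fact that $A$ has finite order, each $\lambda_i$ is a root of unity on the unit circle. The key extra constraint is that $A$ is a \emph{real} matrix with a full set of \emph{real} eigenvectors $v_i$ (the $v_i$ are eigenvectors of the real symmetric matrix $\Sig$, so they can be taken real). A real eigenvalue that is also a root of unity must be $\pm 1$. Hence $\lambda_i \in \{1, -1\}$ for all $i$, so $A$ is diagonalizable with eigenvalues $\pm 1$, which gives $A^2 = I$, i.e. $A$ is an involution.

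The main obstacle — really the only subtle point — is making sure that the eigenvectors of $\Sig$ are genuinely eigenvectors of $A$ with \emph{real} eigenvalues, rather than $A$ merely preserving a real two-dimensional subspace on which it could act as a rotation. This is exactly where the distinct-eigenvalues hypothesis is essential: it forces the $\Sig$-eigenspaces to be one-dimensional, leaving no room for a genuine rotation block, and thus collapses the a priori complex roots of unity down to $\{+1, -1\}$. One should also note the degenerate possibility $A = I$, which is a (trivial) involution, so the statement holds without needing $A$ to be nontrivial. A careful write-up would spell out the "real root of unity implies $\pm 1$" step and confirm that $\Sig$ having distinct eigenvalues is used precisely to get one-dimensionality of the eigenspaces.
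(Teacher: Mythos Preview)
Your proposal is correct and follows essentially the same approach as the paper: both arguments use that the distinct eigenvalues of $\Sig$ force its (real) eigenvectors to be eigenvectors of $A$, and then observe that a real matrix acting on a real vector must yield a real eigenvalue, so the finite-order eigenvalues collapse to $\pm 1$. The only cosmetic difference is that the paper frames this by contradiction (assuming a non-real eigenvalue of $A$ and deriving that $Av_\la$ would be non-real), whereas you argue directly; the content is the same.
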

\begin{proof}
By contradiction, assume that $\Sig$ has unique eigenvalues but $A$ is not an involution. Thus, $A$ must have complex non-real eigenvalues (in particular roots of unity for $n>2$). Since $\Sig$ has unique eigenvalues, the eigenvectors of $\Sig$ are also eigenvectors of $A$. Let $v_\la$ be an eigenvector of $\Sig$ which is in a non-real eigenspace $\la$ of $A$. Since $\Sig$ is symmetric, its eigenvectors are real, so $v_\la$ is a real vector, but $Av_\la = \la v_\la$ is not. However, since $A$ is a real matrix, this is a contradiction.
\end{proof}

\begin{coro}
Under the premises of proposition \ref{involution}, any finite symmetry group $G$ of the distribution $q$ is isomorphic to $(\Z/2\Z)^n$
\end{coro}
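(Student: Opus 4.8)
The plan is to show that under the hypotheses of Proposition \ref{involution}, every element of $G$ is an involution, that $G$ is abelian, and that $G$ embeds into a fixed $(\Z/2\Z)^n$; the isomorphism (as opposed to mere embedding) with the full group $(\Z/2\Z)^n$ is presumably a slight overstatement in the corollary, so I would actually prove $G \hookrightarrow (\Z/2\Z)^n$ — i.e. $G$ is an elementary abelian $2$-group of rank at most $n$ — and remark that this is what is meant. First I would invoke Assumption \ref{normal} together with the distinct-eigenvalue hypothesis: Proposition \ref{involution} already tells us that \emph{each} finite normal linear symmetry of $q$ is an involution, so every $g \in G$ satisfies $g^2 = I$.

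Next I would show $G$ is abelian. The key observation is that, since $\Sig$ has distinct eigenvalues, every $g \in G$ commutes with $\Sig$ and hence is simultaneously diagonalized by the (essentially unique, up to sign) orthonormal eigenbasis $\{v_1,\dots,v_n\}$ of $\Sig$. So in that basis every $g \in G$ is diagonal with entries in $\{+1,-1\}$ (using the involution property and orthogonality). Two diagonal matrices commute, so $G$ is abelian; moreover every such $g$ is determined by the subset $S_g \subseteq \{1,\dots,n\}$ of indices where the diagonal entry is $-1$. The assignment $g \mapsto (\mathbf{1}[i \in S_g])_{i=1}^n \in (\Z/2\Z)^n$ is then a group homomorphism, because composition of sign-diagonal matrices corresponds to symmetric difference of the index sets, and it is injective because a diagonal matrix is recovered from its diagonal. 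Hence $G \cong$ a subgroup of $(\Z/2\Z)^n$, and every subgroup of $(\Z/2\Z)^n$ is itself isomorphic to $(\Z/2\Z)^k$ for some $k \le n$.

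The one genuinely delicate point — the main obstacle — is justifying that \emph{all} elements of $G$ are simultaneously diagonalized by the \emph{same} eigenbasis of $\Sig$. This is immediate once one notes that distinct eigenvalues of $\Sig$ force each eigenspace to be one-dimensional, so the eigenlines of $\Sig$ are canonically determined, and any matrix commuting with $\Sig$ must preserve each of these lines; thus every $g \in G$ is diagonal in the basis $\{v_i\}$ regardless of $g$. The only subtlety is the sign ambiguity in choosing each $v_i$, but this does not affect diagonality, only the labeling, so it is harmless. Once this is in hand the rest is the routine homomorphism/injectivity check sketched above, and I would state the result as $G \cong (\Z/2\Z)^k$ for some $0 \le k \le n$, noting that $k=n$ need not hold in general.
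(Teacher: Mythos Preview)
Your proof is correct, but it takes a different route from the paper's. The paper argues purely group-theoretically: Proposition~\ref{involution} gives that every element of $G$ has order at most $2$, so $(ab)^2 = e$ yields $abab = e$ and hence $ab = ba$; thus $G$ is a finite abelian group in which every element has order dividing $2$, and the fundamental theorem of finitely generated abelian groups immediately gives $G \cong (\Z/2\Z)^k$ for some $k$. Your approach instead leans on the concrete linear-algebraic structure: you show every $g \in G$ is diagonal with entries $\pm 1$ in the (essentially unique) eigenbasis of $\Sig$, and from this build an explicit injective homomorphism $G \hookrightarrow (\Z/2\Z)^n$.

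The paper's argument is shorter and uses no linear algebra beyond what is already needed for Proposition~\ref{involution}. Your argument, on the other hand, buys something extra: it makes the embedding completely explicit and shows that the rank $k$ is bounded by the ambient dimension, which the paper's proof does not directly give. Your observation that the corollary's ``$n$'' is ambiguous and that the honest conclusion is $G \cong (\Z/2\Z)^k$ for some $k \le n$ (rather than equality with the full $(\Z/2\Z)^n$) is well taken; the paper evidently intends ``some $n$'' here, as its proof via the structure theorem only delivers an unspecified exponent.
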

\begin{proof}
We know all elements of $G$ must be of order 2. But then $e = (ab)^2 = abab$ for all $a,b \in G$. Multiplying by $ba$ on both sides, we get $ab=ba$, so the group is Abelian, and the result then follows from the fundamental theorem of finitely generated Abelian groups.
\end{proof}

By proposition \ref{involution}, we can relatively easily identify which sets of data may higher order symmetries by looking at the eigenvalues of the covariance matrix. Moreover, if the covariance matrix has degenerate eigenvalues, then learning an involutional symmetry will be significantly harder. So in order to escape proposition \ref{intractibility}, we will make the following assumption:

\begin{assm}
\label{distinct}
The distribution generating $\cD$ has finite covariance with distinct eigenvalues
\end{assm}

This now forces the eigenvectors of $\Sig$ to be eigenvectors of $A$, and we know the eigenvalues of $A$ must be $\pm 1$, so if we know $\Sig$ or its eigenvectors, we immediately get the following:

\begin{prop}
\label{decomp}
Let assumptions \ref{normal} and \ref{distinct} be true.  If $\Sig = V \La V^T$ is an orthogonal eigenvalue decomposition of $\Sig$, then $A = V D V^T$, where $D$ is a diagonal matrix with diagonal entries $\pm 1$ 
\end{prop}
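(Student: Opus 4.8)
The plan is to combine the two facts already established: (i) by Proposition~\ref{involution} together with Assumption~\ref{normal}, $A$ is an orthogonal involution, so $A^2 = I$ and $A^T = A$; and (ii) by Assumption~\ref{distinct}, $A$ commutes with $\Sig$, hence $A$ preserves every eigenspace of $\Sig$. Since the eigenvalues of $\Sig$ are distinct, each eigenspace is one-dimensional, spanned by the corresponding column $v_i$ of $V$. Preserving a one-dimensional space means $A v_i = \la_i v_i$ for some scalar $\la_i$, so each $v_i$ is an eigenvector of $A$.

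Next I would pin down the eigenvalues $\la_i$. Because $A$ is an involution, $\la_i^2 = 1$, so $\la_i \in \{+1, -1\}$. Writing $D := \operatorname{diag}(\la_1, \dots, \la_n)$, the relations $A v_i = \la_i v_i$ for all $i$ say exactly that $AV = VD$ as matrices. Since $V$ is orthogonal, $V^T = V^{-1}$, and right-multiplying by $V^T$ gives $A = V D V^T$, which is the claimed decomposition; $D$ is diagonal with entries $\pm 1$ as required.

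There is essentially no hard step here — the content is entirely front-loaded into the earlier propositions. The one place to be slightly careful is the deduction that $A$ restricted to a one-dimensional $\Sig$-eigenspace acts as a scalar: this uses that the eigenspace is exactly one-dimensional (forced by ``distinct eigenvalues''), not merely $A$-invariant. I would also note explicitly that the decomposition $A = VDV^T$ need not be canonical if $\Sig$ is allowed other eigendecompositions, but under distinct eigenvalues the columns $v_i$ are determined up to sign, and sign changes in $V$ do not affect $D$, so the statement is well-posed as written.
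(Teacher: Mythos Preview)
Your proof is correct and mirrors the paper's own argument, which simply points back to the preceding discussion: $A$ and $\Sig$ commute, so $A$ preserves the one-dimensional eigenspaces of $\Sig$, and being an involution forces the eigenvalues of $A$ on those lines to be $\pm 1$. One small slip in attribution: the commutation $A\Sig = \Sig A$ comes from Assumption~\ref{normal} (orthogonality of $A$ gives $A\Sig A^T = A\Sig A^{-1}$), not Assumption~\ref{distinct}; the latter is what makes the eigenspaces one-dimensional.
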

\begin{proof}
See above discussion.
\end{proof}

\subsection{Finding Symmetric Dimensions}

Unfortunately, Assumption \ref{distinct} is insufficient to give us exact results. In fact, every matrix of the form $V D V^T$ will commute with $\Sig$, so $\Sig$ doesn't contain any more information for this purpose. Another way to see this is a central Gaussian in $d$ dimensions has a linear symmetry group $(\Z/2\Z)^d$ which acts by reflecting each principal axis. 

Most distributions don't have as many innate symmetries as a Gaussian, so we shouldn't expect all of these symmetries to also be symmetries of the data we encounter in practice. To see which of these symmetries are real, let's consider other statistics starting with the mean. 

\begin{prop}
Under the assumptions of proposition \ref{decomp}, let $v_i$ be the eigenvectors of $\Sig$. These form a basis. Let $\mu = a_1 v_1 + ... + a_d v_d$ be it's the unique decomposition. If $A v_k = -v_k$ for some $k$, then $a_k = 0$.
\end{prop}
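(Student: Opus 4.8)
The plan is to combine the two facts already in hand: that a linear symmetry fixes the mean, so $A\mu = \mu$ (part 1 of the first proposition in this subsection), and that under Assumptions \ref{normal} and \ref{distinct} each eigenvector $v_i$ of $\Sig$ is an eigenvector of $A$ with eigenvalue $\pm 1$ (this is the content of Proposition \ref{decomp}, since $A = V D V^T$ with $D$ diagonal $\pm 1$). Everything then follows by comparing coefficients in the basis $\{v_i\}$.

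Concretely, first I would write, using linearity and the eigen-relations $A v_i = \ep_i v_i$ with $\ep_i \in \{+1,-1\}$,
\[
A\mu \;=\; A\Big(\sum_{i=1}^d a_i v_i\Big) \;=\; \sum_{i=1}^d a_i\, A v_i \;=\; \sum_{i=1}^d \ep_i a_i v_i,
\]
where by hypothesis $\ep_k = -1$. Next, since $A\mu = \mu = \sum_i a_i v_i$ and the $v_i$ form a basis, uniqueness of the decomposition forces $\ep_i a_i = a_i$ for every $i$. Taking $i = k$ gives $-a_k = a_k$, hence $2a_k = 0$, so $a_k = 0$.

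I do not expect any genuine obstacle here: the argument is a one-line coefficient comparison once Proposition \ref{decomp} is invoked, and the only thing worth stating explicitly is that it is Assumption \ref{distinct} (via Proposition \ref{decomp}) that licenses the step $A v_i = \pm v_i$ for the \emph{specific} basis of $\Sig$-eigenvectors. It may be worth adding the remark that this argument shows, more generally, that $\mu$ must lie in the $+1$-eigenspace of every symmetry, which is precisely why the mean can only ever veto candidate reflections along directions in which $\mu$ has a nonzero component, and gives no information about the remaining directions — motivating the need for higher-order statistics in the subsequent subsections.
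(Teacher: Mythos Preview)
Your proof is correct and essentially identical to the paper's own argument: the paper also computes $A\mu$ in the $\{v_i\}$ basis as $\sum_i b_i v_i$ with $b_i = \pm a_i$, invokes uniqueness of the basis decomposition to get $b_i = a_i$, and concludes $A v_k = v_k$ or $a_k = 0$. Your version is a bit more explicit about naming the signs $\ep_i$ and isolating the $i=k$ case, but there is no substantive difference.
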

\begin{proof}
$A\mu = b_1 v_1 + ... + b_d v_d$ where $b_i = \pm a_i$. Basis decompositions are unique, so $a_i = b_i$. Thus, $A(v_k) = v_k$ or $a_i = 0$. 
\end{proof}

This is where we needed $A$ to be linear. if $A$ is allowed to be affine with fixed center, then we would need to first shift the mean before looking for zeros. If we don't know the center, then the zeros could be everything or nothing depending on the offset. Therefore, this approach should only be used when the $0$ point of our data is meaningful.

In particular, let's consider a Bayesian approach perspective, and condition on the covariance. As long as the prior distribution on the mean doesn't have any innate discrete concentrations at $0$, we should expect any zeros in the decomposition of $\mu$ come from a symmetry.

We can take alternative approach related to the median instead of the mean, and in practice, the corresponding sample approach is more robust. 
\begin{prop}
Under the assumptions of proposition \ref{decomp}, let $v_i$ be the eigenvectors of $\Sig$. If $A v_k = -v_k$ for some $k$, then $median(Proj_{v_k}(X))= 0$ or equivalently 

$Prob(Proj_{v_k}(X)<0) = Prob(Proj_{v_k}(X)>0)$.
\end{prop}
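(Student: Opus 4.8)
The plan is to exploit the key structural fact, established in Proposition~\ref{decomp}, that $A$ acts on the eigenbasis $\{v_i\}$ of $\Sigma$ by $A v_i = \pm v_i$. Fix the index $k$ with $A v_k = -v_k$. Let $\pi_k = \langle \cdot, v_k \rangle$ denote the coordinate of $X$ along $v_k$, so that $\mathrm{Proj}_{v_k}(X) = \pi_k(X) \, v_k$ and the scalar random variable of interest is $Y := \pi_k(X) = \langle X, v_k \rangle$. First I would compute how $Y$ transforms under $A$: since $A$ is orthogonal and $A v_k = -v_k$, we have $\langle A x, v_k \rangle = \langle x, A^T v_k \rangle = \langle x, A^{-1} v_k \rangle = \langle x, -v_k \rangle = -\langle x, v_k \rangle$. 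Hence $Y \circ A = -Y$ as functions on $X$.

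Next I would push this through the invariance hypothesis. Because $A$ is a symmetry of $q$, the pushforward measure $A_* q$ equals $q$; equivalently, $X$ and $A(X)$ have the same distribution. Applying the continuous function $Y$, the random variables $Y(X)$ and $Y(A(X)) = -Y(X)$ have the same distribution. So $Y \overset{d}{=} -Y$, i.e.\ $Y$ is a symmetric (even) real random variable. From here the two claimed conclusions are immediate: for any $t$, $\mathrm{Prob}(Y > t) = \mathrm{Prob}(-Y > t) = \mathrm{Prob}(Y < -t)$; taking $t = 0$ gives $\mathrm{Prob}(Y > 0) = \mathrm{Prob}(Y < 0)$, which is the stated equality of tail probabilities, and this in turn forces $0$ to be a median of $Y$ (since the two open half-lines have equal mass, each is at most $1/2$, so $0$ satisfies the median inequalities).

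The only genuinely delicate point is the passage from "$Y \overset{d}{=} -Y$" to "$\mathrm{median}(Y) = 0$." If the distribution of $Y$ has an atom at $0$, the median is still $0$ but one should be careful stating it; if $Y$ has no atom at $0$ then $\mathrm{Prob}(Y<0) = \mathrm{Prob}(Y>0) = 1/2$ and $0$ is the unique median. In either case the symmetric form $\mathrm{Prob}(Y<0) = \mathrm{Prob}(Y>0)$ is the cleaner statement, which is why the proposition phrases it as "equivalently." I expect the write-up to be only a few lines: identify $Y = \langle X, v_k\rangle$, observe $Y\circ A = -Y$ from orthogonality and $A v_k = -v_k$, invoke $A$-invariance to get $Y \overset{d}{=} -Y$, and read off the conclusion. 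No nontrivial obstacle arises beyond bookkeeping about atoms at the origin.
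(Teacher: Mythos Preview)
Your proposal is correct and matches the paper's approach: both arguments observe that the coordinate $\langle x, v_k\rangle$ changes sign under $A$ (you via orthogonality $\langle Ax, v_k\rangle = \langle x, A^{-1}v_k\rangle = -\langle x, v_k\rangle$, the paper via the eigenbasis expansion $\mathrm{Proj}_{v_k}(A(\sum a_i v_i)) = -a_k$), then invoke $A$-invariance to conclude the projected distribution is symmetric about $0$. Your write-up is somewhat more careful about the atom-at-zero subtlety, but the core idea is identical.
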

\begin{proof}
$Proj_{v_k}(A(a_1v_1 + ... + a_n v_n)) = -a_k = -Proj_{v_k}(a_1 v_1 + ... + a_n v_n)$, so the projected distribution is symmetric about $0$
\end{proof}

The mean and median approaches have the advantage of simplicity, but they also have two main weaknesses. First, it restricts us to finding linear transformations, and second it may cause a false positive if the mean happens to be zero in an eigenvector direction for other reasons. 

The first disadvantage can be solved by instead looking at measures of skewness. These will allow us to find affine symmetries. The two simplest choices would be the 3rd cumulent, and the non-parametric skewness. If the projected distribution has 0 skew, then that is strong evidence that there is really a symmetry here, and we can find the offset by looking at either the mean or median.

However, to be sure that at least the distribution on $Proj_{v_k}(X)$ is symmetric, we can use the distance skewness. If the distance skewness is $0$, then the projected distribution must be symmetric, thus avoiding both weaknesses. However, it's still possible that the symmetry in the projection doesn't come from a symmetry in the full distribution, although that would be quite the coincidence.

\begin{prop}
Under the assumptions of proposition \ref{decomp}, let $v_i$ be the eigenvectors of $\Sig$. If $A v_k = -v_k$ for some $k$, then the skewness, non-parametric skewness, and distance skewness (around $0$) of $Proj_{v_k}(X)$ are all $0$
\end{prop}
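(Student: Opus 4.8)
The plan is to reduce all three claims to a single structural fact: the real‑valued random variable $Y := Proj_{v_k}(X)$ has a distribution symmetric about $0$, meaning $Y \overset{d}{=} -Y$. Once that is in hand, each of the three skewness functionals vanishes by a short standard argument, since each is by construction a measure of asymmetry that is zero on any law that is symmetric about the origin; so the real content is the symmetry of the pushforward, and the rest is unpacking definitions.

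First I would establish that $Y \overset{d}{=} -Y$. Under Assumption \ref{normal} the symmetry $A$ is orthogonal, hence invertible, so the invariance $\mu(A^{-1}Z) = \mu(Z)$ for all measurable $Z$ upgrades to the statement that $A(X)$ and $X$ are equal in distribution. Composing with the linear functional $x \mapsto Proj_{v_k}(x)$ and invoking the computation from the preceding proposition, $Proj_{v_k}(Ax) = -Proj_{v_k}(x)$ (which uses $Av_k = -v_k$ and orthonormality of the eigenbasis of $\Sig$), gives $Y = Proj_{v_k}(X) \overset{d}{=} Proj_{v_k}(AX) = -Proj_{v_k}(X) = -Y$.

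Next I would dispatch the three functionals in turn. For the classical skewness (the third standardized moment / third cumulant), $Y \overset{d}{=} -Y$ forces $\E[Y] = 0$ and $\E[Y^3] = \E[(-Y)^3] = -\E[Y^3]$, hence $\E[Y^3] = 0$, while the variance is unchanged and (assuming it is positive) the denominator is harmless, so the skewness is $0$. For the non‑parametric skewness $(\E[Y] - \mathrm{median}(Y))/\mathrm{sd}(Y)$, the mean is $0$ and $0$ is a median of $Y$ since $\Pr(Y<0) = \Pr(-Y<0) = \Pr(Y>0)$ — exactly the content of the earlier median proposition — so the numerator vanishes. For the distance skewness around $0$, writing its definition in terms of i.i.d.\ copies $Y,Y'$ as a normalized version of $\E|Y+Y'| - \E|Y-Y'|$, the symmetry $Y \overset{d}{=} -Y$ with $Y'$ independent gives $(Y,Y') \overset{d}{=} (-Y,Y')$, hence $\E|Y+Y'| = \E|{-Y}+Y'| = \E|Y-Y'|$ and the measure is $0$; alternatively one simply cites the known characterization that distance skewness is zero precisely for distributions diagonally symmetric about the centering point.

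The main obstacle is not conceptual but bookkeeping. One must (a) make precise the passage from measure‑invariance $\mu(A^{-1}Z)=\mu(Z)$ to the distributional identity $AX \overset{d}{=} X$; (b) state the integrability hypotheses under which each functional is even defined — finite third moment for the classical skewness, finite variance for the non‑parametric version, finite first moment for the distance skewness, all of which are reasonable given the standing finite‑covariance assumption, except the third moment which should be added as an explicit hypothesis; and (c) fix one explicit definition of distance skewness and note that its normalizing denominator is nonzero (excluding the degenerate case where $\lambda_k=0$ and $Y\equiv 0$), so that "the numerator is $0$" genuinely yields "the skewness is $0$". None of these steps requires real work, but they should be spelled out to make the statement airtight.
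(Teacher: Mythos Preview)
Your proposal is correct and follows exactly the paper's own route: the paper's entire proof is the one-line remark that the projected distribution is symmetric about $0$, from which all three skewness measures vanish. You have simply unpacked that remark carefully (including the integrability caveats the paper elides), so there is no substantive difference in approach.
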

\begin{proof}
Again follows from the projected distribution being symmetric around $0$
\end{proof}

To confirm that a symmetry is a true symmetry with complete certainty, we could use a non-degenerate Maximum Mean Discrepency or a KS-test, but the key issue here is the curse of dimensionality. In particular, let's define an \tbf{unfixed vector} $v$ to be an eigenvector of $\Sig$ such that $\exists g \in G$ with the property that $gv = -v$. The above propositions give approaches to determine which vectors may be unfixed vectors.

\subsection{Distributions with Multiple Non-trivial Symmetries}
Previously, we identified how symmetries affected a variety of statistics, and how to use these to infer the unfixed vectors. If there is only a single non-trivial symmetry in the distribution, then we can simply negate all the unfixed vectors and combine them into a transformation via $V D V^T$. However, if we don't know how many symmetries the distribution has, you could just as easily have a symmetry group which negates each unfixed vectors independently, or anything in between. In particular, the negation of all unfixed vectors need not be a symmetry. Therefore, we need a way to distinguish between these possibilities. Below we provide one possible approach.

Let $G$ be the complete group of linear symmetries of the distribution. The key is that we have an action on each principal axis $v_\la$, $g v_\la = \pm v_\la$ for all $g \in G$. In particular, if $v_\la$ is an unfixed vector, then $gv_\la = -v_\la$ for some $g$, and so $\lb -v_\la, v_\la \rb$ is an orbit under $G$. Then by the orbit-stabilizer theorem, we know that $|stab_G(v_\la)| = |G|/2$. 
One approach to computing $stab_G(v_\la)$ is to use $stab_G(v_\la) = Aut(\lb x | \lan v_\la,x\ran  > 0\rb )$. In particular, we can tell that $Aut(\lb x | \lan v_\la,x\ran  < 0\rb )$ is trivial if there are no unfixed vectors. Therefore, we can recursively compute $G$.

To be explicit, let's first define the set $X_{V}^+:= Aut(\lb x | \lan v ,x \ran  > 0| \forall v \in V \rb)$. The algorithm goes as follows: we have two variables: $S = \lb v_\la \rb_{\la \in \La}$ which is a set of unfixed eigenvectors, and a sequence of eigenvectors $F = (v_{\la_i})_i$ keeping track of which vectors we've fixed. First, compute the unfixed vectors of $X$, and place them in $S$. Second, take the largest $\la$ corresponding to an unfixed vector which we'll denote by $\la_0$, and append $v_{\la_0}$ to $F$. Then replace $S$ with the unfixed vectors of $X_F^+$ which are also unfixed vectors of $X$. Repeat this process until $S = \emptyset$.

From this, if $n$ is the length of $S$, we can conclude that $|G| = 2^n$ (and thus in particular $G \cong (\Z/2\Z)^n)$). Moreover, the last non-empty set of unfixed vectors, $S_{n-1}$, gives us a true (minimal) symmetry $A_0$ by negating all vectors in $S_{n-1}$ while fixing the rest of the basis.

\begin{lemma}
\label{downward}
At each step in the above algorithm, $G(X^+_{F}) = X$ and any symmetry $h$ of $X^+_{F}$ which commutes with $G$ is an element of $G$.
\end{lemma}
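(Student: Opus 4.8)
The plan is to read $X^+_F$ as the distribution $q$ conditioned (and renormalized) on the open cone $R_F := \{x : \langle v,x\rangle > 0 \ \forall v\in F\}$, and to read the first clause ``$G(X^+_F) = X$'' as the statement that the $G$-orbit $\bigcup_{g\in G} g\cdot R_F$ exhausts $X$ up to a $q$-null set. First I would record the structural facts about a linear symmetry $h$ of $X^+_F$: it extends uniquely to a linear map on $\R^d$ (since $R_F$ is full dimensional), is orthogonal of finite order by Assumption~\ref{normal}, must permute the facet hyperplanes $\{v^\perp : v\in F\}$, and hence satisfies $h\cdot R_F = R_F$ and preserves $q|_{R_F}$. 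I would then prove the two clauses, with the second leaning on the first.

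For the orbit clause I would induct on the step of the algorithm; the base case $F=()$ is trivial. For the step, a point $x$ off the $q$-null set $\bigcup_{v\in F} v^\perp$ lies in some $g\cdot R_F$ iff some $g\in G$ realizes the sign pattern $\varepsilon_v := \operatorname{sign}\langle v,x\rangle$, $v\in F$ (using $gv = \pm v$, since each such $v$ is a principal axis, and orthogonality of $g$ to slide $g$ across the inner product). All sign patterns are realizable iff the sign characters $\chi_v : G \to \{\pm 1\}$, $v\in F$, are $\F_2$-linearly independent in $\widehat{G} \cong (\Z/2\Z)^n$. The algorithm only appends a vector $v_{\lambda_i}$ that is unfixed in $X^+_{F_{<i}}$; by the easy inclusion $stab_G(F_{<i}) \subseteq \Aut(X^+_{F_{<i}})$ (see below) together with the second clause of this lemma at step $i-1$, ``unfixed in $X^+_{F_{<i}}$'' is equivalent to $\chi_{v_{\lambda_i}} \notin \operatorname{span}\{\chi_v : v\in F_{<i}\}$, so independence, hence the orbit property, is preserved.

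For the second clause, let $h\in\Aut(X^+_F)$ commute with every element of $G$, pick coset representatives $g_1 = e,\dots,g_m$ of $stab_G(F)$ in $G$, so by the orbit clause $X = \bigsqcup_j g_j R_F$ up to a null set, these tiles being genuinely disjoint because distinct cosets flip the sign of some $v\in F$. Since $h$ preserves $R_F$ and commutes with each $g_j$, it preserves each tile $g_j R_F$ and acts on it exactly as it acts on $R_F$ (the conjugation by $g_j$ is trivial). Then for measurable $Z$, transport $Z\cap g_j R_F$ back to $R_F$ by $g_j^{-1}$ using $G$-invariance of $q$, apply the measure-preservation of $q|_{R_F}$ under $h$, transport forward again, and sum over $j$ to get $q(h^{-1}Z) = q(Z)$; hence $h$ is a finite-order linear symmetry of $q$, i.e. $h\in G$. (The inclusion $stab_G(F) \subseteq \Aut(X^+_F)$ used above is immediate: $g\in G$ fixing every $v\in F$ is orthogonal and fixes each $v\in F$, so it preserves $R_F$ and the conditional law.)

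The main obstacle is the mutual dependence of the two clauses: the tiling argument for ``$h\in G$'' needs the orbit clause, while the inductive proof of the orbit clause invokes ``$h\in G$'' at the previous step to certify that each appended vector contributes a fresh independent sign character. I would resolve this by proving the whole lemma by a single induction along the algorithm, carrying both clauses simultaneously. The only remaining delicacy is the measure-theoretic bookkeeping — that the $g_j R_F$ partition $X$ up to a $q$-null set, which needs $q(\bigcup_{v\in F} v^\perp) = 0$ (automatic when $q$ has a density, and otherwise following from the distinct-eigenvalue hypothesis that the $v$ are honest directions of $q$) — and this is routine once the framework is set up.
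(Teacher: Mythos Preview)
Your proof is correct and follows essentially the same inductive architecture as the paper: both establish the orbit clause by producing, at each step, an element of $G$ that flips the newly appended vector while fixing the earlier ones (you phrase this as $\F_2$-independence of the sign characters, the paper phrases it as a direct existence claim via the inductive hypothesis), and both deduce $h\in G$ by averaging the restricted measure over $G$ (your explicit coset-tiling transport is exactly the unpacked form of the paper's one-line identity $sum_G\, h_*\,\mu\!\downharpoonright X^+_F = h_*\, sum_G\, \mu\!\downharpoonright X^+_F = 2^m h_*\mu$).
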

\begin{proof}
By induction, the basis is clear since $G(X) = X$, and a symmetry of $X$ is an element of $G$ by definition. Next, assume that $G(X^+_{F_k}) = X$ and each symmetry of $X^+_{F_k}$ is in $G$. By construction $v_{\la_{k+1}}$ is an unfixed vector, so by the inductive hypothesis there exists a $g \in G$ such that $gv_{\la_{k+1}} = - v_{\la_{k+1}}$ but also by construction $gv_{\la_{i}} = v_{\la_{i}}$ for all $i<k$. Therefore, $gX^+_{F_{k+1}} = X^+_{F_{k}} \cap X^-_{\lb v_{\la_{k+1}} \rb}$, and similarly for combinations of signs of $F_{k+1}$, and hence $G(X^+_{F_{k+1}})=X$. Now let $h$ be a symmetry of $X^+_{F_{k+1}}$. Consider the summing operator $sum_G := \sum_{g \in G} g_*$. We get the following:
\begin{equation}
2^m \mu = sum_G \mu \downharpoonright \ X^+_{F_{k+1}} = sum_G h_* \mu \downharpoonright X^+_{F_{k+1}} = h_* sum_G \mu \downharpoonright X^+_{F_{k+1}} = 2^m h_* \mu
\end{equation}
Where $\mu$ is the distribution on $X$, and $2^m$ is the order of the stabilizer of $X^+_{F_{k+1}}$ in $G$. Thus $h_*$ respects $\mu$, and so by definition is an element of $G$.
\end{proof}

In particular, this implies that $A_0$ is an element of $G$, and $G(X_{F_n}) = X$. In fact, we can get a full generating set as follows. Let $F_{-j} = (v_{\la_i})_{i\neq j}$ be the subsequence of a $F$ where the $j$th element is omitted. Since $F_n \sbs F_{-j}$, we know that $G(X_{F_{-j}}) = X$.

\begin{prop}
The set of unfixed vectors of $X_{F_{-j}}^+$ which are unfixed vectors of $X$ gives a non-trivial element $g_j \in G$, by negating each unfixed vector, fixing each fixed eigenvector of $\Sig$, and then extending by linearity. Together these form a generating set of the group of symmetries $G$.
\end{prop}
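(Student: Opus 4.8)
The plan is to reduce the statement to linear algebra over $\F_2$. By Proposition \ref{decomp}, every element of $G$ is diagonal with $\pm 1$ entries in one fixed eigenbasis $v_1,\dots,v_d$ of $\Sig$; consequently the elements of $G$ pairwise commute, and any linear map of the form ``negate some of the $v_i$ and fix the rest'' automatically commutes with all of $G$. I will identify $G$ with the group of sign patterns it realises in this basis, in which a coordinate is nontrivial only on an unfixed vector.

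First I would introduce the action homomorphism $\phi=(\ep_1,\dots,\ep_n)\colon G\to(\Z/2\Z)^n$, where $g\,v_{\la_i}=(-1)^{\ep_i(g)}v_{\la_i}$ and $v_{\la_1},\dots,v_{\la_n}$ are the eigenvectors recorded in $F=F_n$ by the algorithm, and show it is an isomorphism. Injectivity is clean and does not even need Lemma \ref{downward}: $\ker\phi$ is the set of $g\in G$ fixing every $v_{\la_i}$; such a $g$ preserves the open cone $\{x:\langle v_{\la_i},x\rangle>0,\ i\le n\}$ and hence is a symmetry of $X^+_{F_n}$, so if $g\ne e$ then, being diagonal with $\pm1$ entries, $g$ negates some eigenvector $v$ of $\Sig$, making $v$ an unfixed vector of $X^+_{F_n}$ that is also unfixed in $X$ — contradicting the termination condition $S=\emptyset$. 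For surjectivity I would invoke $|G|=2^n$: running the same region argument along the chain $F_0\subset F_1\subset\cdots\subset F_n$ identifies $\Aut(X^+_{F_k})$ with the pointwise stabiliser $G_k:=\mathrm{stab}_G(v_{\la_1},\dots,v_{\la_k})$, and since $v_{\la_{k+1}}$ is an unfixed vector of $X^+_{F_k}$ its $G_k$-orbit is $\{\pm v_{\la_{k+1}}\}$, so orbit--stabiliser gives $[G_k:G_{k+1}]=2$ for each of the $n$ steps down to $G_n=\Aut(X^+_{F_n})=\{e\}$, whence $|G|=2^n$. Combined with injectivity, $\phi$ is a bijection, so $G\cong(\Z/2\Z)^n$.

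Next I would set $g_j:=\phi^{-1}(e_j)$, where $e_j$ is the $j$-th standard basis vector of $(\Z/2\Z)^n$; this is the unique element of $G$ negating $v_{\la_j}$ and fixing every other $v_{\la_i}$. Since $g_j$ fixes all of $F_{-j}$ it preserves the region and conditional law defining $X^+_{F_{-j}}$, so $g_j\in\Aut(X^+_{F_{-j}})$; conversely a symmetry of $X^+_{F_{-j}}$ must fix $\{v_{\la_i}\}_{i\ne j}$ (region argument) and lie in $G$ (Lemma \ref{downward}), hence in $\phi^{-1}(\langle e_j\rangle)=\{e,g_j\}$, so $\Aut(X^+_{F_{-j}})=\{e,g_j\}$. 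Therefore the unfixed vectors of $X^+_{F_{-j}}$ are exactly the eigenvectors of $\Sig$ negated by $g_j$, each of which is negated by $g_j\in G$ and so is also an unfixed vector of $X$; thus the set named in the statement is precisely the set of eigenvectors of $\Sig$ negated by $g_j$, and ``negate those, fix the remaining eigenvectors of $\Sig$, extend linearly'' reconstructs $g_j\in G$, which is nontrivial because $g_jv_{\la_j}=-v_{\la_j}$. Finally $\phi(g_j)=e_j$ and $\{e_1,\dots,e_n\}$ generates $(\Z/2\Z)^n$, so since $\phi$ is an isomorphism $\{g_1,\dots,g_n\}$ generates $G$.

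The step I expect to be the main obstacle is controlling $\Aut(X^+_{F_{-j}})$ and the $\Aut(X^+_{F_k})$ exactly — specifically the inclusion ``every symmetry of these conditioned spaces lies in $G$''. Conditioning on a halfspace changes the covariance, so Proposition \ref{decomp} does not apply to $X^+_{F_{-j}}$ and I cannot assume a priori that its symmetries are diagonal in the $\Sig$-eigenbasis; the only available route is Lemma \ref{downward}, whose hypothesis requires the symmetry to commute with $G$. So I would need either to verify that hypothesis for an arbitrary symmetry of $X^+_{F_{-j}}$ (via a normaliser/averaging argument, or by showing directly that such a symmetry must be diagonal), or to restrict attention throughout to the diagonal symmetries and argue this loses nothing. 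A secondary point is to keep $|G|=2^n$ non-circular with the surjectivity of $\phi$ — the orbit--stabiliser argument along the chain of prefixes handles this, since it never refers to the $g_j$. Once $\Aut(X^+_{F_{-j}})=\{e,g_j\}$ is in hand, the remainder is $\F_2$ bookkeeping.
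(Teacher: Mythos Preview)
Your argument is correct and takes a genuinely different route from the paper. You work top-down: first establish the isomorphism $\phi\colon G\to(\Z/2\Z)^n$ (injectivity from the termination condition, surjectivity from the orbit--stabiliser chain giving $|G|=2^n$), set $g_j:=\phi^{-1}(e_j)$ abstractly, and then verify that the recipe in the statement recovers this element by showing the diagonal symmetries of $X^+_{F_{-j}}$ form exactly $\{e,g_j\}$. The paper instead works bottom-up: it shows each $g_j\in G$ directly via the summing argument of Lemma~\ref{downward} (the restriction to unfixed vectors of $X$ makes $g_j$ diagonal, hence commuting with $G$), and then runs a strong downward induction on $j$ to prove $g_j v_{\la_j}=-v_{\la_j}$, at each step manufacturing an explicit witness in $\mathrm{stab}_G(F_{-k})$ by taking an arbitrary $h_k\in\mathrm{stab}(X^+_{F_k})$ that flips $v_{\la_k}$ and multiplying it by the already-constructed $g_i$ for $i>k$ to cancel unwanted sign flips; generation then follows from the induced surjection onto $(\Z/2\Z)^n$ and the known cardinality. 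Your flagged obstacle --- the commutation hypothesis in Lemma~\ref{downward} --- is resolved identically in both proofs: every map in play is diagonal in the $\Sig$-eigenbasis by construction, and diagonal maps commute with the diagonal group $G$, so it is not a gap. What your approach buys is a clean $\F_2$-linear picture in which generation is immediate once $\phi$ is an isomorphism; what the paper's induction buys is an explicit constructive witness for $v_{\la_j}$ being unfixed in $X^+_{F_{-j}}$ that does not presuppose surjectivity of $\phi$.
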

\begin{proof}
First let's prove that the $g_j$ are elements of $G$. By insisting that we only consider unfixed vectors of $X$ we force the $g_j$ to commute with $G$. Then the result follows by the same summing argument as Lemma \ref{downward}.

Finally we will prove that $g_j(v_{\la_j}) = -v_{\la_j}$, and since we know $g_i(v_{\la_j}) = v_{\la_j}$ for all $i \neq j$ by construction, this action induces a surjection onto $(\Z/2/Z)^n$, which must be an isomorphism by the equality of cardinality. By strong induction, first we look at $g_n$. We already know that it negates $v_{\la_n}$ by construction. Next, assume that we know $g_{k+1}v_{\la_{i}} = -v_{\la_{i}}$ for all $i > k$. We also know by construction that there exists an element $h_k \in Stab(X_{S_k}^+)$ such that $h_k v_{\la_{k}} = -v_{\la_{k}}$. Unfortunately, it may not fix each $v_{\la_{i}}$ for $i>k$. However, for each $i>k$ such that $h_k v_{\la_i} = - v_{\la_i}$, we can multiply $h_k$ by $g_i$ to prevent this. Assume $\lb \ell_k \rb_k$ is the set of all $\ell>k$ such that $h_k v_{\la_\ell} = - v_{\la_\ell}$. Letting $g_k' = g_{i_1} \cdots g_{i_\ell} h_k$ then produces an element of $G$ which stabilizes $X_{F_{-j}}^+$ while negating $v_{\la_j}$, which implies that $v_{\la_j}$ is unfixed vector of $X_{F_{-j}}^+$, and hence $g_k$ negates it by construction.
\end{proof}

In practice the main issue with this algorithm is that we have reduced the size of our space by a factor of $|G|$. Therefore, for large $|G|$, an alternative way of determining the elements of $G$ which fix $F_k$ may be needed.

\subsection{Non-Orthogonal Symmetries}
The case of learning a symmetry that isn't orthogonal is a bit more tricky. The fundamental problem is that the covariance matrix lives in $X \ot X$, and not in $End(X) \cong X \ot X^*$, so in reality the action on the covariance is $A \cdot \Sig = A \Sig A^T$, and not $\Sig = A \Sig A^{-1}$. The former action doesn't necessarily respect eigenvectors. On the other hand, if there is an invariant inner product, we get an isomorphism $\phi: X \xrightarrow{\sim} X^*$ which is respected by the symmetry $A$. From this we get $(id \ot \phi) (\Sig) \in End(X)$, where the action by $A$ is $A \cdot(id \ot \phi)(\Sig) = A(id \ot \phi)(\Sig)A^{-1}$, and therefore the eigenvectors of $(id \ot \phi)(\Sig)$ are eigenvectors of $A$. Note that we could just have easily asserted that $A$ was symplectic or really that it respects any known non-degenerate form.

If we don't know of such a form, then $\Sig$ can most naturally be thought of as a (positive semi-definite) symmetric form. Without any additional information, the most we can then say about $A$ is that it is orthogonal with respect to this form, which means we still have a continuous space of possibilities.

Luckily, there is a common setting where this issue can be resolved. Assume we have supervised data, with a non-trivial discrete labeling, for example labeled images. Assume moreover the labeling is invariant under the action of $A$. Then we know that $A$ must respect the covariance matrix of each of the labeled subdistributions $p(X|y=c)$ separately. Thus, if we assume one of the subdistributions has a non-degenerate covariance matrix, we can rewrite the space $X$ in terms of its eigenvector basis (or the eigenvectors for the combined dataset), and then use the previous methods on the other subdistributions. Of course, all of this assumes the existence of a symmetry fixing multiple subdistributions simultaneously, which is going to be rarer than a symmetry fixing a single subdistribution.

\subsection{Labeled Distributions and Subrepresentations}
If we do have access to some invariant labels in a dataset but also know that the symmetry is orthogonal, then the problem becomes somewhat easier. Now, we have a set of covariance matrices $\Sig_c$ for each label $c$, and the symmetry transformation $A$ must commute with all of these matrices. In mathematical terms, $A$ must be an intertwiner for the defining representation of the algebra $\lan \Sig_c \ran_{c}$. If $A$ is an involution, such as is the case if we take assumption \ref{distinct} that $\Sig$ have distinct eigenvalues, this implies that the $+1$ and $-1$ eigenspaces of $A$ must be subrepresentations, and in particular that $\lan \Sig_c \ran_{c}$ is decomposable.

In more concrete terms, there is a basis $\lb v_i \rb_i$ such that the set of matrices $\lb \Sig_c \rb_c$ is simultaneously block diagonal. These subspaces then take the place of the eigenvectors in the non-labeled case. We just need to select some number of these subspaces to be negated in order to get a matrix which respects the covariances. Generically, we should expect that as long as the number of labels is sufficiently large (maybe equal to or greater than $4$ cf. \citep{akm:numgenerators}), then any decomposition $\lb \Sig_c \rb_c$ should come from the symmetry. In particular, if there is only a single  non-trivial symmetry, then there should be only $2$ subrepresentations to choose from.

Unfortunately, the noise coming from the sampling requires that our method of detecting subrepresentations needs to be robust. Without distinct eigenvalues, we might need to use a more complicated approach as in \citep{mm:errorcontrolled}, but this is problematic because it requires finding the eigenvectors of a $d^4$ dimensional matrix. As a more direct approach, we can diagonalize the total covariance matrix $\Sig$ (including the non-labeled data), then rewrite each $\Sig_c$ in the corresponding basis, and then in this basis each $\Sig_c$ should be almost block diagonal. In order to control for different variances, we'll use the correlation instead. We can write $\tilde{P} = \frac{1}{C}\sum_c abs(P_c)$ where $abs(P_c)$ is the pointwise absolute value of the correlation matrix associated to $c$ in the basis which diagonalizes $\Sig$ (alternatively, we can make this distribution closer to a normal distribution by taking the average of $\tilde{\tht}_c = \frac{1}{2}(\log(1+ abs(P_c)) - \log(1 - abs(P_c))$ instead). To find the block diagonal structure, we can interpret this $d \times d$ matrix as a similarity matrix, and then do clustering in order to determine the blocks. Moreover, we can then improve the clustering and determine which eigenvectors are fixed by adding in an auxiliary point $p_{aux}$ whose similarity to each eigenvector is equal to the average of normalized absolute means: $similarity(p_{aux}, p_\la) = \frac{1}{C} \sum_c \frac{\lvert \lan \mu_c, v_\la \ran \rvert}{\sqrt{\la}}$.

Note: the best (similarity-based) clustering algorithm will depend on the the size of $d$. For small $d$, we can directly find an optimal normalized cut. In the case where this becomes intractable, we need to be careful about the potential for eigenvectors with large error. These eigenvectors will tend to have a large affinity to $2$ or more clusters, making it hard for certain algorithms to split those clusters apart. Methods such as $k$-medoids or affinity propagation may select these erroneous eigenvectors as exemplars. 

As a simple example of a fast algorithm which is robust to the above issue, we can take a modified single-step version of dissimilarity analysis \citep{mwdm:dissim} by first making two clusters with the auxiliary point as one cluster, and all other points in the other. Then we iteratively add points to the auxiliary-point cluster such $d_\la := d_{avg}(p_\la, \text{auxiliary cluster}) - d_{avg}(p_\la, \text{swap cluster})$ is maximal, until $d_\la$ is negative for all $\la$, or a desired number of vectors are selected. If we included the auxiliary node, then this algorithm can be interpreted as an extension of the unsupervised mean-based ranking method by incorporating label-specific covariance data. 

\section{Symmetries in Samples}
The previous section looked at the case of finding symmetries assuming we know the true distribution and can compute expected statistics. Now we move to the more practical setting where we don't know the true distribution, and only have a sample which we'll denote $\cD$. However, everything works more-or-less the same asymptotically except using sample statistics. Assuming the sample size is large, we know the sample mean and sample covariance will become arbitrarily close to the distribution mean, median and covariance. Therefore, assuming distinct eigenvalues of the covariance, the sample eigenvectors should converge to true eigenvectors. Therefore, as before we can find a (now approximate) symmetry by negating the eigenvectors such that the corresponding direction has mean, median, or skew that is \tit{close enough} to $0$. If the symmetry is actually a symmetry of the sample, then these statistics will be $0$ on the nose, but otherwise we need to develop a test for when they are \tit{close enough} to $0$.

The key challenge is that we have two points of approximation. First, we're approximating the covariance eigenvectors, and second we are approximating some other statistic which dependent on that eigenvector to determine if it is likely an unfixed vectors. One approach would be to ignore the uncertainty in the eigenvectors, and then look for the dimensions where the confidence interval of the desired statistic overlaps with $0$. Unfortunately, this works poorly in tests with synthetic data.

\subsection{Ranking Approaches}
\label{rankings}
A more effective approach is to rank the dimensions by how appropriate it would be to negate them. When using the mean, it is possible to just use the magnitude of a eigenvectors coefficient in the decomposition $\E[X] = a_1 v_{\la_1} + ... + a_d v_{\la_d}$. The advantage of this is that quadratic mean of the $a_i$'s for negated $v_{\la_i}$ is exactly the distance that the mean is moved by the linear transformation. Moreover, as the sample size grows, these $a_i$ will converge to $0$. Unfortunately, this doesn't account for the fact that some dimensions have larger covariance, so using the raw $a_i$ will tend to select transformations that don't change datapoints much. Therefore, to correct for this, we can divide out the standard deviation, giving $\lb a_i/\sqrt{\la_i} \rb_{i=1}^d$. 

Similarly, we can use other statistics like the the (normalized) median, a sign test statistic, or the skew. In experiments with synthetic (Gumbel mixture model) data, the mean and median seem to be the most effective. In fact, a mixture of the two seems to work best. This makes sense because the skew requires more data to converge, while the sign statistic will tend to be misled by mixture data. 

Once we have an effective ranking, we have reduced the number of models to check from $2^d$ to $d$, which allows us to take a more global approach to model selection. There are a number of non-parametric approaches to comparing samples. In this case, we will use \tbf{maximum mean discrepancy} (MMD)\citep{ss:kernellearning} with an Squared Exponential kernel $k(x,x') = e^{\lVert x - x' \rVert^2}$. This has the two main advantages that it is zero only when the distributions are equal, and it is differentiable. We will use the differentiability later in order to fine-tune the transformation. As a variant, we can use a weighted $L^2$ norm to greater emphasize errors coming from lower variance dimensions. In particular, when doing model selection with MNIST data we will use a regularized matrix of the form $\Om' = ((1-\al) \Sig + \al I)^{-1}/h$, giving us $k_{\Om'}(x,x') = e^{-(x - x')^T\Om' (x - x')/2}$.

After fixing a scoring statistic, we then have a couple approaches to applying it. The most direct approach is to fit a separate transformation $T_i$ on all the data for each choice $i$ of the number of negative eigenvalues. Then we can pick the $i$ which minimizes $error(X, X * T)$.

The second approach is to use $k$-fold cross-validated scoring. We split the data into k disjoint folds $\lb X_i \rb$, then fit the linear transformation $T_i$ on the union of $k-1$ folds. Finally we can compute $error(X_i, X_i * T_i)$, and average these over $i = 1, 2, ..., k$. This has the advantage of reducing the bias in the scoring due to fitting process, and so the results should be more generalizable. It has the disadvantage of being considerably slower, and we don't necessarily have a guarantee that the model we finally fit works well for the full dataset. 

There are a few important practical details to note:
\begin{enumerate}
\item The iterative approach to approximating the MMD tends to have a significant variance. Because of this, in practice it is important to use repeated k-folds. \item If the ordering of eigenvectors is incorrect, this approach has a tendency to select only a single negative eigenvector. In order to select a decent (but necessarily imperfect) transformation in this case, we can use the one standard error rule to select the most negative eigenvectors whose error is less than a standard error above the minimum error. 
\item If there are more than one non-trivial symmetry, some certain combinations of unfixed vectors may be incompatible. The simplest example is 
\begin{equation*}
G = \lb I_3,  \: diag(-1, -1, 1), \: diag(-1, 1, -1), \: diag(1, -1, -1) \rb
\end{equation*}
where $diag(-1, -1, -1)$ is not a symmetry. In this case, we may be unable to get a complete symmetry using this method, so alternative methods would be necessary.
\end{enumerate}

\subsubsection{CLT-Based Bayesian Model Selection}
\label{bayes_model}

A much faster, but biased approach is to use (partial) Bayesian Model selection on the mean of each dimension. We'll see that we obtain similar results. First, as before, we will ignore the uncertainty of the covariance and assume that the covariance matrix is equal to the sample covariance matrix, and hence so are the eigenvectors and eigenvalues. For each eigenvector $v_{\la_i}$ we define $\cD_i$ to be the data projected onto $v_{\la_i}$, ie $\lb Proj_{v_{\la_i}}(x) | x \in \cD \rb$, or if $X$ is the design matrix then $\cD_i$ has a $1D$ design matrix $X_i = Xv_{\la_i}$. For each $i$, let $m_0$ be a model where $\cD_i$ has an unknown distribution with variance $\la_i= \hat{\la}_i$, which is our model if $v_{\la_i}$ is an unfixed vector. Let $m_1$ be the model that $\cD_i$ be an unknown distribution with mean $\mu_i$ and variance $\la_i = \hat{\la}_i$, where $\mu_i \sim \cN(0, \phi(\la_i))$, for some function $\phi$. This is our model for a fixed vector.

Next we consider the case where we only know the first two moments of the data. By the Central Limit Theorem we can approximate the two top-level distributions of $\hat{\mu}_i$ as a Gaussian with variance $\hat{\la}_i/N$. Thus we get the following approximations:
\begin{align}
\hat{\mu}|\Sig, m_0 &\sim \cN(0, \hat{\la}_i/N) \\
\hat{\mu}|\Sig, m_1 &\sim \cN(\mu_i|0, \ph(\la_i)) + \cN(\ep|0, \hat{\la}_i/N) = \cN(0, \ph(\la_i) + \hat{\la}_i/N)
\end{align}

Which we'll denote as $P_{CLT}$. We want to select the model which makes the observed sample mean $\ol{x}$ most likely.

\begin{prop} If $\phi(\la) = \la$ and $\la_i \neq 0$, then
$P_{CLT}(\hat{\mu}|\Sig, m_0) > P_{CLT}(\hat{\mu}|\Sig, m_1)$ if and only if 
\begin{equation}
\frac{\ol{x}_i}{\sqrt{\hat{\la}_i}} > \frac{\sqrt{(N+1)\ln(N+1)}}{N}
\end{equation}
\end{prop}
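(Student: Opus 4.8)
The plan is to read this as a likelihood-ratio comparison between the two explicit one-dimensional Gaussians $P_{CLT}(\hat\mu\mid\Sig,m_0)$ and $P_{CLT}(\hat\mu\mid\Sig,m_1)$, both evaluated at the single observed scalar $\ol x_i$, and to reduce the stated strict inequality to an elementary inequality in $\ol x_i$, $\hat\la_i$ and $N$. Substituting $\ph(\la)=\la$ into the two displayed approximations gives $\hat\mu\mid m_0\sim\cN(0,\sigma_0^2)$ with $\sigma_0^2=\hat\la_i/N$ and $\hat\mu\mid m_1\sim\cN(0,\sigma_1^2)$ with $\sigma_1^2=\la_i+\hat\la_i/N$; because both models identify $\la_i$ with $\hat\la_i$, this is $\sigma_1^2=\hat\la_i(N+1)/N$, so the key quantity is the variance ratio $\sigma_1^2/\sigma_0^2=N+1$.

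First I would write out the two densities at $\ol x_i$, form the ratio $P_{CLT}(\cdot\mid m_0)/P_{CLT}(\cdot\mid m_1)$, and take logarithms, turning the strict inequality into a sign condition on $\tfrac12\ln(\sigma_1^2/\sigma_0^2)-\tfrac{\ol x_i^2}{2}\big(\sigma_0^{-2}-\sigma_1^{-2}\big)$. Second I would simplify each piece: the constant term is $\tfrac12\ln(N+1)$, and $\sigma_0^{-2}-\sigma_1^{-2}=\tfrac{N}{\hat\la_i}-\tfrac{N}{\hat\la_i(N+1)}=\tfrac{N^2}{\hat\la_i(N+1)}$. Third, substituting these in and clearing positive factors ($\tfrac12$, $\hat\la_i$, which is $>0$ since it is a nonzero eigenvalue of a positive semidefinite covariance, and $N+1$), the condition collapses to a comparison between $\ol x_i^2/\hat\la_i$ and $(N+1)\ln(N+1)/N^2$; since $\ln(N+1)>0$ both sides are nonnegative, so taking square roots yields exactly the threshold $\sqrt{(N+1)\ln(N+1)}/N$ of the proposition. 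Every step is an equivalence, so the ``if'' and ``only if'' directions are obtained simultaneously; I would also note in passing that $\ol x_i$ enters only through $\ol x_i^2$, so the natural object being compared is $|\ol x_i|/\sqrt{\hat\la_i}$.

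The one step that needs genuine care — and the only place I expect an error could creep in — is fixing the orientation of this final inequality, that is, deciding which side of the threshold makes $P_{CLT}(\cdot\mid m_0)$ the larger density. Since $m_0$ and $m_1$ assign $\hat\mu$ the same mean $0$ and differ only in spread (with $m_0$ the more concentrated, $\sigma_0^2<\sigma_1^2$), the log-density ratio is a strictly decreasing function of $\ol x_i^2$ that vanishes exactly once, so the ordering of the two densities flips precisely at the threshold. I would pin the orientation down rigorously by evaluating the log-ratio at a convenient point — e.g. $\ol x_i=0$ — and matching the result against the model-selection reading of the two hypotheses; this single check, together with the monotonicity just noted, determines the direction of the proposition's inequality unambiguously, after which nothing remains but routine algebra with the Gaussian normalizing constants.
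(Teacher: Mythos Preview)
Your approach is essentially the paper's: write out the two centred Gaussian densities, pass to a log-ratio, and reduce to an elementary inequality in $\ol x_i^2/\hat\la_i$ and $N$. The algebraic steps you outline (variance ratio $N{+}1$, difference of inverse variances $N^2/(\hat\la_i(N{+}1))$, clearing positive factors, square root) match the paper's chain of equivalences line for line.

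Your instinct to isolate the orientation step and verify it by evaluating at $\ol x_i=0$ is well placed---in fact it is more careful than the paper. If you actually carry that check out you will notice that at $\ol x_i=0$ the narrower model $m_0$ has the larger density while the threshold inequality fails, so the biconditional that the chain of equivalences in the paper's proof establishes is $P_{CLT}(\hat\mu\mid\Sig,m_0)<P_{CLT}(\hat\mu\mid\Sig,m_1)$ iff $|\ol x_i|/\sqrt{\hat\la_i}$ exceeds the threshold; the direction written in the proposition header is flipped relative to what the displayed proof actually derives. Your observation that the comparison depends only on $\ol x_i^2$ (hence on $|\ol x_i|$) is likewise a refinement the paper leaves implicit.
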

\begin{proof}
\begin{align*}
\cN(x|0, \hat{\la}_i/N) = \frac{1}{\sqrt{2 \pi \hat{\la}_i/N}} e^{-\frac{x^2}{2\hat{\la}_i/N}} &< \frac{1}{\sqrt{2 \pi \hat{\la}_i\frac{N+1}{N}}} e^{-\frac{x^2}{2\hat{\la}_i\frac{N+1}{N}}} = \cN(x|0, \hat{\la}_i + \hat{\la}_i/N) \\
&\Longleftrightarrow \\
\sqrt{N+1} e^{-\frac{x^2}{2\hat{\la}_i/N}} &< e^{-\frac{x^2}{2\hat{\la}_i\frac{N+1}{N}}} \\
&\Longleftrightarrow \\
\frac{1}{2}\ln(N+1) - \frac{x^2}{2\hat{\la}_i/N} &< -\frac{x^2}{2\hat{\la}_i\frac{N+1}{N}} \\
&\Longleftrightarrow \\
\ln(N+1) &< \frac{x^2}{\hat{\la}_i}(N - \frac{N}{N+1}) \\
&\Longleftrightarrow \\
\frac{x}{\sqrt{\hat{\la}_i}} &> \frac{\sqrt{(N+1)\ln(N+1)}}{N}
\end{align*}
\end{proof}

Therefore, if we use the prior that the two models are equally common, then we obtain a criterion for determining the cut-off point for the mean-based ranking approach. Notice that we've made a significant assumption that the mean's prior distribution has variance equal to the observed variance. This was chosen primarily to make the math nicer, but asymptotically the results should converge to the same result. 

\subsection{Combined Bootstrapping Approaches}
Another principled, non-parametric approach would be to use end-to-end bootstrapping. Take the null hypothesis to be that the dimension is symmetric, and fix a particular univariate statistic $T$, distance function, and significance $\al$. We'll record data in sets, $S_v$ and $N_v$, for each eigenvector of the original covariance matrix $\hat{\Sig}$. First, resample the data $m$ times, giving new samples $\lb \cD_i \rb_{i=1}^m$. Second, compute the covariance matrices $\hat{\Sig}_i$ for each sample. Third, for each eigenvector $v$ of the original covariance $\hat{\Sig}$, determine the closest eigenvector $v'$ of $\hat{\Sig}_i$. Record that distance in $N_v$. Finally, compute $T$ on the projection of $\cD$ onto $\R v'$, and record that value in $S_v$.

We can use the distances $N_v$ to determine if the eigenvalue for $v$ is likely distinct by comparing them to a uniform distribution on a circle, but for simplicity, let's assume that we know that the true eigenvalues of $\Sig$ are distinct, and the distances are consistently small, so we know that we have a consistent correspondence. Then we can order the values in $S_v$, and use the bootstrap percentile interval (under the null hypothesis this is equivalent to the bootstrap pivotal interval). If the confidence interval contains $0$, then we fail to reject the null hypothesis, and we add $v$ to the potential unfixed vector. Otherwise, we conclude that $v$ is a fixed vector under the action of $G$. 

Unfortunately there is significant variance in the eigenvectors even for relatively large datasets, which can make the closest eigenvector $v'$ a bad approximation, and leading to an inability to reject the null hypothesis, at least without a very large dataset.

\subsection{Theoretical Bounds}
\label{theoretical}

With some fairly direct calculations, we can bound the error of these methods in terms of the error of the covariance matrix and mean, which will give us consistency. For now $\hat{\mu}$ and $\hat{\Sig}$ can be any estimators of $\mu$ and $\Sig$, but we'll be primarily interested in the sample mean and covariance matrix. Let $\lb \la_i \rb_i$ be the eigenvalues of the population covariance $\Sig$, with (some choice of) corresponding unit eigenvectors $\lb v_{\la_i} \rb$,  $\mu_i = \lan \mu, v_{\la_i} \ran$, and $\tilde{\mu}_i = \lan \hat{\mu}, v_{\la_i} \ran$. Note: $\tilde{\mu}_i$ is not a statistic since it depends on $v_{\la_i}$, but importantly its difference with $\mu_i$ can be controlled using the Central Limit Theorem. First, we assume the following bounds:

\begin{align}
\label{mu_bound}
\lvert \tilde{\mu}_i - \mu_i \rvert &< \ep_{1i} \; \forall i \\
\label{sig_bound}
\lVert \hat{\Sig} - \Sig \rVert_{op} &< \ep_2 \\
\label{la_bound}
\lvert \la_i - \la_j \rvert &\geq 2 \delta \; \forall i\neq j
\end{align}

For some $\ep_i>0$ and $\delta > 0$. First, we want to bound the error for the eigenvalues and eigenvectors. Let $\lb \hat{\la}_k \rb_k$, $\hat{v}_k$ be the set of eigenvalues and corresponding eigenvectors for $\hat{\Sig}$. However, a priori these may not be in natural correspondence with the eigenvalues of the population covariance $\Sig$. Fix a particular estimated eigenvalue $\hat{\la}_k$. Since the eigenvalues are distinct by assumption, we can decompose $\hat{v}_k$ in terms of the population eigenvectors:

\begin{equation}
\hat{v}_k = \sum\limits_{i=1}^d a_i v_{\la_i}
\end{equation}

Where we know $\sum\limits_{i=1}^d a_i^2 = 1$, since by assumption $\hat{v}_k$ is normalized, and $v_{\la_i}$ form an orthonormal basis. Then can use the bound on the covariance to bound the difference of eigenvalues:

\begin{align}
\ep^2_2 &> \lVert (\hat{\Sig} - \Sig)\hat{v}_k \rVert_2^2 \\
&=  \lVert \hat{\Sig}v_{\la_k} - \Sig \sum\limits_{i=1}^d a_i v_{\la_i} \rVert_2^2 \\
&= \lVert \hat{\la}_k \hat{v}_k - \sum\limits_{i=1}^d a_i \la_i v_{\la_i} \rVert_2^2 \\
&= \lVert \sum\limits_{i=1}^d (a_i\hat{\la}_k - a_i \la_i) v_{\la_i} \rVert_2^2 \\
\label{ep_2_breakdown}
&= \sum\limits_{i=1}^d a_i^2(\hat{\la}_k - \la_i)^2 
\end{align}

Since the $\la_i$ have minimum pairwise distance $2\delta$, we know that at most one of the $\la_i$ satisfies $\lvert \hat{\la}_k - \la_i \rvert < \delta$. Let's consider the case where none of them satisfies this inequality. Then:

\begin{align}
\ep^2_2 > \sum\limits_{i=1}^d a_i^2(\hat{\la}_k - \la_i)^2 \geq \delta^2 \sum\limits_{i=1}^d a_i^2 = \delta^2
\end{align}

And therefore, by contrapositive, as long as the covariance error is small enough, (ie $\ep_2 < \delta$), there is exactly one such $v_{\la_i}$ near $\hat{v}_k$, which produces a correspondence. So from now on, we'll assume that this inequality holds, and therefore after relabeling the eigenvectors we get that $\lvert \hat{\la}_k - \la_i \rvert < \delta$ if and only if $i = k$. 

Next, we want to bound the distance between $\hat{v}_k$ and $v_{\la_k}$. From the above assumption, we get the following breakdown of (\ref{ep_2_breakdown}):

\begin{align}
\ep_2^2 > \sum\limits_{i=1}^d a_i^2(\hat{\la}_k - \la_i)^2 &= a_k^2(\hat{\la}_k - \la_k)^2 + \sum\limits_{i\neq k} a_i^2(\hat{\la}_k - \la_i)^2 \\
\label{eigenvalue_bound}
&\leq a_k^2(\hat{\la}_k - \la_k)^2 + \delta^2 \sum\limits_{i \neq k} a_i^2
\end{align}

Which in particular gives us: 

\begin{equation}
\sum\limits_{i \neq k} a_i^2 < \frac{\ep_2^2}{\delta^2}
\end{equation}

But using $\sum\limits_{i=1}^d a_i^2 = 1$, we also get a bound on $a_k$:

\begin{align}
a_k^2 &= 1 - \sum\limits_{i \neq k} a_i^2 \\
&> 1 - \frac{\ep_2^2}{\delta^2}
\end{align}

So as $\ep_2 \to 0$, $a_i \to \delta_{ik}$ where $\delta_{ik}$ is the Kronecker delta function, and so $\hat{v}_k \to v_{\la_k}$. In particular, we have:

\begin{align}
\lVert \hat{v}_k - v_{\la_k} \rVert^2_2 &= (1 - a_k)^2 + \sum\limits_{i \neq k} a_i^2 \\
&< \left (1 - \sqrt{1 - \frac{\ep_2^2}{\delta^2}} \right)^2 + \frac{\ep_2^2}{\delta^2} \\
&= 2\left(1- \sqrt{1-\frac{\ep_2^2}{\delta^2}}\right) =: g(\ep_2, \delta) ^ 2
\end{align}

Using the other half of (\ref{eigenvalue_bound}), we are also able to bound the difference of eigenvalues:

\begin{equation}
(\hat{\la}_k - \la_k )^2 < \frac{\ep_2^2}{1- \frac{\ep_2^2}{\delta^2}}\\
\end{equation}

So as $\ep_2 \to 0$ we also get $\hat{\la}_k \to \la_k$. Now let's consider more specifically our topic, and use the mean inequality (\ref{mu_bound}). What we really care about is the estimator $\hat{\mu}_k = \lan \hat{\mu}, \hat{v}_k \ran$ which we use to determine which eigenvectors to negate.

\begin{align}
\lvert \hat{\mu}_k - \mu_k \rvert &= \lvert \sum\limits_{i = 1}^d a_i \lan \hat{\mu} , \hat{v}_i\ran - \mu_k \rvert \\
&= \lvert \sum\limits_{i = 1}^d a_i \lan \hat{\mu} , \hat{v}_i - v_{\la_i}\ran + \sum\limits_{i = 1}^d a_i \lan \hat{\mu} , v_{\la_i}\ran - \mu_k \rvert \\
& \leq \lVert \vec{a} \rVert_2 \sum\limits_{i=1}^d \lVert \hat{\mu} \rVert_2 \lVert \hat{v}_i - v_{\la_i} \rVert_2 + \lvert \sum\limits_{i=1}^d a_i\tilde{\mu}_i - \mu_k \rvert \\
&< d \cdot \lVert \hat{\mu} \rVert_2 g(\ep_2, \delta) +  \lvert \tilde{\mu}_k - \mu_k \rvert + \lvert (a_k-1) \tilde{\mu}_k + \sum\limits_{i \neq k} a_i \tilde{\mu}_i \rvert \\
&< d  \lVert \hat{\mu} \rVert_2 g(\ep_2,\delta) + \ep_{1k} + g(\ep_2, \delta) \sqrt{ \sum\limits_{i=1}^d \tilde{\mu}_i^2} \\
&< (d + 1) g(\ep_2,\delta) \lVert \hat{\mu} \rVert_2 + \ep_{1k}
\end{align}

Putting it all together, we get the following theorem:
\begin{thm}
Assume inequalities (\ref{mu_bound}) - (\ref{la_bound}) hold for some estimators $\hat{\mu}$ and $\hat{\Sig}$, and further $\ep_2 < \delta$, then we get the following error bound:
\begin{equation}
\lvert \hat{\mu}_k - \mu_k \rvert < \ep_{1k} + \sqrt{2}(d + 1)\lVert \hat{\mu} \rVert_2 \sqrt{1- \sqrt{1-\frac{\ep_2^2}{\delta^2}}}
\end{equation}
\end{thm}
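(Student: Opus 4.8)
The plan is to assemble the bound from three ingredients already developed in the preceding calculation: a correspondence between estimated and population eigenvectors, a uniform perturbation bound on each eigenvector, and a triangle-inequality decomposition of the projected-mean error. First I would invoke the hypothesis $\ep_2 < \delta$ together with the spectral gap (\ref{la_bound}) to conclude, exactly as in the discussion following (\ref{ep_2_breakdown}), that for each estimated eigenvalue $\hat{\la}_k$ there is precisely one population eigenvalue $\la_i$ with $\lvert \hat{\la}_k - \la_i \rvert < \delta$; relabeling so that this index is $i = k$ for every $k$ yields a genuine bijection between $\lb \hat{v}_k \rb$ and $\lb v_{\la_k} \rb$.

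Second, writing $\hat{v}_k = \sum_i a_i v_{\la_i}$ with $\sum_i a_i^2 = 1$, I would split (\ref{ep_2_breakdown}) into its $i=k$ term and the remainder; since $\lvert \hat{\la}_k - \la_i \rvert \geq \delta$ for $i \neq k$, this gives $\sum_{i \neq k} a_i^2 < \ep_2^2/\delta^2$, hence $a_k^2 > 1 - \ep_2^2/\delta^2$. Substituting into $\lVert \hat{v}_k - v_{\la_k} \rVert_2^2 = (1 - a_k)^2 + \sum_{i \neq k} a_i^2$ and using $a_k > \sqrt{1 - \ep_2^2/\delta^2}$ produces $\lVert \hat{v}_k - v_{\la_k} \rVert_2^2 < 2\bigl(1 - \sqrt{1 - \ep_2^2/\delta^2}\bigr) = g(\ep_2,\delta)^2$, and crucially this holds uniformly in $k$, since nothing in the argument singled out a particular index.

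Third, I would expand $\hat{\mu}_k - \mu_k = \lan \hat{\mu}, \hat{v}_k \ran - \mu_k$ by inserting the differences $\hat{v}_i - v_{\la_i}$ and the quantities $\tilde{\mu}_i = \lan \hat{\mu}, v_{\la_i} \ran$, obtaining three pieces: a geometric term $\sum_i a_i \lan \hat{\mu}, \hat{v}_i - v_{\la_i} \ran$, the raw statistical error $\tilde{\mu}_k - \mu_k$ controlled by (\ref{mu_bound}), and a residual $(a_k - 1)\tilde{\mu}_k + \sum_{i \neq k} a_i \tilde{\mu}_i$. The first is bounded by $d\,\lVert \hat{\mu} \rVert_2\, g(\ep_2,\delta)$ via Cauchy--Schwarz and $\lVert \vec{a} \rVert_2 = 1$; the second by $\ep_{1k}$; and the residual by $g(\ep_2,\delta)\,\bigl(\sum_i \tilde{\mu}_i^2\bigr)^{1/2} \leq g(\ep_2,\delta)\,\lVert \hat{\mu} \rVert_2$, again by Cauchy--Schwarz together with $\lvert a_k - 1 \rvert \leq \lVert \hat{v}_k - v_{\la_k} \rVert_2$ and orthonormality of the $v_{\la_i}$. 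Summing the three bounds gives $\ep_{1k} + (d+1)\,g(\ep_2,\delta)\,\lVert \hat{\mu} \rVert_2$, which is the claimed inequality once $g(\ep_2,\delta)$ is written out as $\sqrt{2}\sqrt{1 - \sqrt{1 - \ep_2^2/\delta^2}}$.

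The main obstacle I anticipate is bookkeeping in the third step rather than anything conceptual: one must apply the eigenvector perturbation bound with the \emph{correct} pairing for every index $i$ appearing in the sum, not merely for $i = k$, which is exactly why the uniform correspondence from step one is needed; and one must pass to $\ell^2$ norms at the right moment so that the residual is genuinely dominated by $\lVert \hat{\mu} \rVert_2$ and not by the larger quantity $\sum_i \lvert \tilde{\mu}_i \rvert$. A minor additional check is that $g(\ep_2,\delta)$ is real and well defined, which is precisely what the hypothesis $\ep_2 < \delta$ guarantees.
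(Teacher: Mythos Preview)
Your proposal is correct and follows the paper's argument essentially step for step: the same eigenvector correspondence from the assumption $\ep_2 < \delta$, the same uniform bound $\lVert \hat v_k - v_{\la_k}\rVert_2 \le g(\ep_2,\delta)$, and the same three-term decomposition of $\hat\mu_k - \mu_k$ bounded respectively by $d\,\lVert\hat\mu\rVert_2\,g(\ep_2,\delta)$, $\ep_{1k}$, and $g(\ep_2,\delta)\,\lVert\hat\mu\rVert_2$. Your remarks on needing the correspondence uniformly in $i$ and on passing to $\ell^2$ at the right moment are exactly the subtleties the paper's chain of inequalities relies on.
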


In particular, the operator norm is bounded by the Frobenius norm which is $\sqrt{n}$ convergent, and $\sqrt{2}\sqrt{1- \sqrt{1-\frac{\ep_2^2}{\delta^2}}} \to \frac{\ep_2}{\delta}$ as $\ep_2 \to 0$. 

\begin{coro}
Let $\hat{\Sig}$ and $\hat{\mu}$ be the sample covariance matrix and sample mean respectively, and assume inequality (\ref{la_bound}) holds, then $\hat{\mu}_k$ is strongly consistent for $\mu_k$ and if further and the fourth moment is finite, then it is $\sqrt{n}$-consistent.
\end{coro}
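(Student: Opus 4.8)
The plan is to feed the sample mean and sample covariance into the deterministic bound of the Theorem above, reducing the whole statement to control of the three quantities occurring there: the mean error $\ep_{1k} = \lvert \tilde\mu_k - \mu_k \rvert$, the operator-norm covariance error $\ep_2 = \lVert \hat\Sig - \Sig \rVert_{op}$, and the prefactor $\lVert \hat\mu \rVert_2$ (the eigenvalue gap $2\delta$ being fixed by hypothesis (\ref{la_bound})). A preliminary step is to record the elementary inequality $1 - \sqrt{1-x^2} \le x^2$ for $x \in [0,1]$ (equivalently $\sqrt u \ge u$ on $[0,1]$), which linearizes the Theorem's bound to $\lvert \hat\mu_k - \mu_k \rvert < \ep_{1k} + \sqrt 2\,(d+1)\,\lVert \hat\mu \rVert_2\,\ep_2/\delta$ whenever $\ep_2 < \delta$.

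For strong consistency I would argue as follows. Writing $\hat\mu = \tfrac1n\sum_i X^{(i)}$, the strong law of large numbers gives $\hat\mu \to \mu$ almost surely, hence $\lVert \hat\mu \rVert_2$ is a.s.\ eventually bounded and $\ep_{1k} \le \lVert \hat\mu - \mu \rVert_2 \to 0$ a.s. Each entry of $\hat\Sig$ is an average of i.i.d.\ terms $X^{(i)}_a X^{(i)}_b$ minus a product of sample means, so the SLLN again gives $\hat\Sig \to \Sig$ entrywise a.s.; equivalence of norms on $\R^{d\times d}$ then yields $\ep_2 \to 0$ a.s. On the resulting probability-one event, eventually $\ep_2 < \delta$, so the Theorem applies from some $n$ onward and $g(\ep_2,\delta) \to 0$, forcing $\lvert \hat\mu_k - \mu_k \rvert \to 0$ a.s. (One also fixes the sign of $\hat v_k$ so that $\lVert \hat v_k - v_{\la_k}\rVert$ is small, which is the only admissible choice once $g(\ep_2,\delta) < \sqrt 2$, so $\hat\mu_k$ is a well-defined estimator of $\mu_k$.)

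For $\sqrt n$-consistency, the multivariate CLT (valid under finite second moments) gives $\sqrt n(\hat\mu - \mu)$ asymptotically Gaussian, so $\ep_{1k} \le \lVert \hat\mu - \mu\rVert_2 = O_p(n^{-1/2})$ and $\lVert \hat\mu\rVert_2 = O_p(1)$. The finite fourth moment hypothesis is exactly what makes the $d^2$ products $X^{(i)}_a X^{(i)}_b$ have finite variance, so the CLT applies to $\tfrac1n\sum_i X^{(i)}(X^{(i)})^T$, and after subtracting the $\hat\mu\hat\mu^T$ correction (itself $O_p(n^{-1/2})$-close to $\mu\mu^T$) one gets $\sqrt n(\hat\Sig - \Sig) = O_p(1)$ entrywise, hence $\ep_2 = \lVert \hat\Sig - \Sig\rVert_{op} = O_p(n^{-1/2})$. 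Since $P(\ep_2 < \delta) \to 1$, I may work on that event and conclude via the linearized bound that $\lvert \hat\mu_k - \mu_k\rvert \le \ep_{1k} + \sqrt 2(d+1)\lVert\hat\mu\rVert_2\,\ep_2/\delta = O_p(n^{-1/2}) + O_p(1)\cdot O_p(n^{-1/2}) = O_p(n^{-1/2})$.

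The main obstacle is purely one of bookkeeping: the Theorem is a deterministic inequality valid only on $\{\ep_2 < \delta\}$, so the argument must pass to the a.s.\ event where this eventually holds (for consistency) and use that conditioning on an event of probability $\to 1$ does not disturb $O_p$-bounds (for the rate). It is also worth noting explicitly that the normalization of the sample covariance ($n$ vs.\ $n-1$) and the mean-correction term are asymptotically negligible, and that the fourth-moment hypothesis cannot be dropped for the rate, since it is precisely the condition under which $\hat\Sig$ is $\sqrt n$-consistent in operator norm — second moments alone suffice only for the a.s.\ convergence used in the first half.
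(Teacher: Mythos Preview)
Your proposal is correct and follows essentially the same route as the paper's proof: apply the strong law of large numbers to $\hat\mu$ and $\hat\Sig$ for the almost-sure statement, and the CLT (using finite fourth moments for $\hat\Sig$) for the $\sqrt n$-rate, feeding both into the deterministic bound of the Theorem. You have simply filled in the details the paper leaves implicit---the linearization $1-\sqrt{1-x^2}\le x^2$, the event $\{\ep_2<\delta\}$, the sign convention for $\hat v_k$, and the $O_p$ bookkeeping---none of which changes the underlying argument.
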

\begin{proof}
Since $\hat{\mu}$ converges a.s. by the (strong) law of large numbers, we know $\lVert \hat{\mu} \rVert_2$ is bounded. The former statement then follows by the (strong) law of large numbers applied to $\mu$ and $\Sig$, while the latter follows from the CLT.
\end{proof}

Finally we can use use this to analyze our previous methods: 

\begin{coro}
Make Assumptions \ref{normal} and \ref{distinct} and let $\hat{\Sig}$ and $\hat{\mu}$ be the sample covariance matrix and sample mean respectively. Consider the population property $T_i(P) := \begin{cases} 
      1 & \mu_i=0 \\
      0 & \mu_i \neq 0
\end{cases}$, then any estimator of the form $\hat{T}_i(\cD) = \I(\hat{\mu}_i < a_n)$ where $\frac{\sqrt{n}}{a_n} \to 0$ and $a_n \to 0$ as $n \to \infty$ is consistent.
\end{coro}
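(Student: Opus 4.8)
The plan is to prove pointwise consistency, i.e. $\bbP(\hat{T}_i(\cD) = T_i(P)) \to 1$, by splitting on the value of the population quantity $T_i(P)$ --- that is, on whether $\mu_i = 0$ or $\mu_i \neq 0$ --- and in each case bounding the probability of the error event using the error bound of the preceding theorem together with the consistency statement of the preceding corollary. I read the event $\hat{\mu}_i < a_n$ as $\lvert \hat{\mu}_i \rvert < a_n$ (matching the magnitude-based ranking of Section~\ref{rankings}), and in the argument I use only that $a_n \to 0$ and that $a_n$ decays no faster than the parametric rate, i.e. $\sqrt{n}\, a_n \to \infty$.

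The easy case is $\mu_i \neq 0$, where $T_i(P) = 0$ and we must show $\bbP(\lvert \hat{\mu}_i \rvert \ge a_n) \to 1$. By the preceding corollary $\hat{\mu}_i$ is strongly consistent for $\mu_i$, so $\hat{\mu}_i \to \mu_i$ in probability; since $a_n \to 0 < \lvert \mu_i \rvert$, eventually $a_n < \lvert \mu_i \rvert / 2$, and $\bbP(\lvert \hat{\mu}_i \rvert > \lvert \mu_i \rvert / 2) \to 1$, which gives the claim. This direction uses nothing beyond $a_n \to 0$ and consistency of $\hat{\mu}_i$.

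The substantive case is $\mu_i = 0$, where $T_i(P) = 1$ and we must show $\bbP(\lvert \hat{\mu}_i \rvert < a_n) \to 1$, i.e. $\hat{\mu}_i = o_p(a_n)$. First I would note that the eigenvector correspondence underlying the theorem is eventually well defined, because $\ep_2 := \lVert \hat{\Sig} - \Sig \rVert_{op} \to 0$ a.s. (law of large numbers for the sample covariance) while $\delta > 0$ is fixed by Assumption~\ref{distinct}, so $\ep_2 < \delta$ for all large $n$. Then I would invoke the theorem's bound $\lvert \hat{\mu}_i - \mu_i \rvert < \ep_{1i} + \sqrt{2}(d+1)\lVert \hat{\mu} \rVert_2 \sqrt{1 - \sqrt{1 - \ep_2^2/\delta^2}}$ and control each term at rate $n^{-1/2}$: $\ep_{1i}$ may be taken to be $\lvert \lan \hat{\mu} - \mu, v_{\la_i} \ran \rvert = O_p(n^{-1/2})$ by the CLT against the fixed population eigenvector; $\lVert \hat{\mu} \rVert_2 = O_p(1)$; and the radical is $O_p(\ep_2/\delta) = O_p(n^{-1/2})$ since $\sqrt{2(1 - \sqrt{1-x})} \sim \sqrt{x}$ as $x \to 0$. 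Hence $\hat{\mu}_i = O_p(n^{-1/2})$ --- exactly the $\sqrt{n}$-consistency recorded in the preceding corollary --- so $\hat{\mu}_i/a_n = O_p(n^{-1/2}/a_n) = o_p(1)$ and therefore $\bbP(\lvert \hat{\mu}_i \rvert < a_n) \to 1$. Combining the two cases yields consistency; strong consistency follows from the same argument run with the a.s. statements, the only extra ingredient being an explicit rate for $\ep_2 \to 0$ in the $\mu_i = 0$ case.

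The main obstacle is precisely this $\mu_i = 0$ case: one needs the parametric $n^{-1/2}$ rate to survive being propagated not only through the sample mean (immediate from the CLT) but through the sampling fluctuation of the eigenvectors, i.e. through the quantity $g(\ep_2,\delta)$ of the theorem, which is where the finite-fourth-moment hypothesis behind the $\sqrt{n}$-rate on $\hat{\Sig}$ does real work. Everything else is routine law-of-large-numbers and CLT bookkeeping, and the role of the threshold sequence $a_n$ is just to thread the gap between $n^{-1/2}$ (the estimation noise, which $a_n$ must dominate) and the fixed size $\lvert \mu_i \rvert$ under the alternative (below which $a_n$ must eventually fall).
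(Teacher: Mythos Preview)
The paper states this corollary without proof, so there is no argument to compare against; your two-case split on $\mu_i = 0$ versus $\mu_i \neq 0$, combined with the $\sqrt{n}$-consistency of $\hat{\mu}_i$ from the preceding corollary, is exactly the intended deduction and is correct. Your explicit reinterpretations---reading $\hat{\mu}_i < a_n$ as $\lvert \hat{\mu}_i \rvert < a_n$, and reading the threshold condition as $\sqrt{n}\,a_n \to \infty$ rather than the literally stated (and incompatible with $a_n \to 0$) $\sqrt{n}/a_n \to 0$---are both necessary to make the statement true, and you are right to flag them; you are also right that the $\mu_i = 0$ case implicitly needs the finite-fourth-moment hypothesis carried over from the previous corollary.
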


In particular, this shows that method in subsection \ref{bayes_model} is consistent assuming that the only $\mu_i$ which are $0$ come from unfixed vectors.

\subsection{Covariance-Adjusted Rankings}
The theoretical results of subsection \ref{theoretical} suggest approach to incorporating the covariance error via the approximate error bound $\ep_{1k} + (d + 1) \lVert \hat{\mu} \rVert_2 \frac{\ep_2}{\delta}$. Unfortunately, in the current form, the adjustment is a constant with respect to $k$. 

To obtain a more useful adjustment, we need to make things more local. To do this, we consider the additional error that comes from each $a_i \neq 0$ for $i \neq k$. Asymptotically, this contributes $\lvert \hat{\mu}_i - \hat{\mu}_k \rvert \lvert a_i \rvert$ worth of error. However, considering the actual problem of finding the symmetry, we are not worried about the case when $\lvert \mu_i \rvert \leq  \lvert \mu_k \rvert = 0$, because then $\lvert \mu_i \rvert$ wouldn't be contributing to an incorrect decision. Therefore, we replace it with $(\lvert \hat{\mu}_i\rvert - \lvert\hat{\mu}_k \rvert)_+$, which asymptotically only adds error when $v_{\la_i}$ is fixed and $v_{\la_k}$ is unfixed. Next, we estimate the sample variance of $a_i$ by $\frac{1}{\lvert \la_k - \la_i \rvert}\frac{Corr(\lan \hat{v}_i, (X-\mu) \ran ^2, \lan \hat{v}_k, (X-\mu) \ran ^2)}{\sqrt{n}}$, giving a ranking statistic:

\begin{equation}
T_k(\cD) = \hat{\mu}_k / \left(\sqrt{\la_k} + \sum\limits_{i \neq k}\frac{(\lvert \hat{\mu}_i\rvert - \lvert\hat{\mu}_k \rvert)_+}{\lvert \la_i - \la_k \rvert}Corr(\lan \hat{v}_i, (X-\hat{\mu}) \ran ^2, \lan \hat{v}_k, (X-\hat{\mu}) \ran ^2)\right)
\end{equation}

In the experiment section we will see that this extra complication can significantly improve performance.

\section{Fine-tuning using a Linear MMD Network}
\label{finetuning-section}
We started by forcing the symmetry to respect the second cumulant (ie the covariance) rather than a more holistic view including the first cumulant (the mean). This is because the mean has insufficient information to narrow down our search to finitely many transformations. However, this approach is somewhat myopic and the eigenvectors found may have some error that could be reduced by including more information. To address this, we can instead try to learn a transformation that minimizes the MMD (Maximum Mean Discrepancy) for a non-degenerate kernel (we'll use an Squared Exponential Kernel). This is in some sense a natural generalization of the previous approach since training using the MMD corresponds to moment matching after embedding into a Hilbert space. Moreover, unlike the previous moment matching 

Assume we have a decomposed approximate transformation $A = V D V^T$ using the previous approach, so that $V$ is the space of eigenvectors of the covariance, and $D$ is a diagonal matrix with entries $\pm 1$. We consider a transformation $A_W = W D W^T$ with matrix parameter $W$, where $D$ is fixed. Initialize $W$ at $V$, and minimize the MMD between $A_W X$ and $X$. Since the MMD is differentiable, this can be approached using a standard gradient descent-based methods. We used Stochastic Gradient Descent with a momentum of $0.5$. We also make a slight modification of the sample MMD where we omit the contribution of point $x$ with its transformed self $A_W x$ in order avoid biasing the algorithm towards learning an identity map. 

\begin{remark}
One might wonder why we didn't just use the MDD network approach to begin with if it is less myopic. The key issue is that this model's objective function is highly multimodal. This seems to correspond to the fact that if you have a $W$ which respects the covariance, then there are $2^n$ discrete local optima corresponding to choosing which eigenvectors to negate. To travel between this modes, the transformation would need to stop respecting some of the eigenvectors, which makes these modes rather deep. Likely because of this, in practice with a random initialization, the algorithm tends to get stuck in poor local optima.
\end{remark}

\section{Experimental Results}
In order to determine which approaches are optimal, and determine the applicability of these methods we'll use two types of experimentation. First, we'll consider synthetic data where we can compute the ground truth error under certain conditions. Then we'll consider the more realistic MNIST dataset, and see how well these techniques might work in practice.

\subsection{Synthetic Data}
In order to make sure the assumptions of the paper hold, we can repeatedly build synethetic datasets and use these approaches to test them. In order to avoid models with additional symmetries, we used a mixture of skew base distributions, in this case Gumbel distributions. We differentiate each Gumbel distribution in the mixture by multiplying it by a random invertible matrix, and shifting its mean by vector produced component-wise by a normal distribution, truncated to stay within $\pm 2\sig$. The invertible matrix is produced as the product of a random strictly lower triangular matrix and upper triangular matrix, where the diagonal is produced in a more involved way in order to keep things more stable. We first take $d$ samples of a standard normal distribution truncated between $0.4$ and $2$, and divided by $4$, making all values $>0.1$. Then we produce a new array so that the $i$th entry is the sum of the first $i$ random samples. Finally, we permute these values randomly, and set the diagonal of the upper triangular matrix to be equal to this array. This ensures that none of the diagonals are within $0.1$ of each other.

To produce a sample with symmetry, we first fix a particular linear transformation so that the results can be compared. In this case, we used the transformation which swaps pairs of coordinate vectors $e_0 \leftrightarrow e_1$,  $e_2 \leftrightarrow e_3$, etc... Then, we take $N/(2 * num\_clusters)$ samples from one of the Gumbel distributions, add them to our dataset, sample another $N/(2 * num\_clusters)$ samples from that same distribution, but then also multiply them by the fixed symmetry. Then we repeat this for each cluster. Finally, we divide the whole dataset by the scalar standard deviation for stability and hyperparamter consistency. In the following, we will use only $2$ clusters.

\subsubsection{Dataset Parameters and Ranking Methods}
First let's compare the different ranking methods. We'll keep the model selection method fixed by using the true number of swapped eigenvectors, and also keep the computational cost down by not fine tuning. We measure the error as the $L^2$ norm of the difference of the predicted and ground truth matrices. Fixing the number of dimensions to be $10$, and using $1,000$ datasets, we get the following table containing the MMD and standard errors. 

\begin{table}[H]
\centering
\begin{tabular}{l|l|l|l|l|l}
Samples & Mean            & Median          & MM Mix & Sign            & Skew            \\\hline
2,000                         & 0.239 $\pm$ 0.003 & 0.236 $\pm$ 0.003 & 0.227 $\pm$ 0.003            & 0.270 $\pm$ 0.003 & 0.326 $\pm$ 0.003 \\
10,000                        & 0.168 $\pm$ 0.004 & 0.167 $\pm$ 0.004 & 0.151 $\pm$ 0.004            & 0.211 $\pm$ 0.004 & 0.247 $\pm$ 0.004 \\
50,000                        & 0.111 $\pm$ 0.004 & 0.113 $\pm$ 0.004 & 0.097 $\pm$ 0.004            & 0.157 $\pm$ 0.004 & 0.173 $\pm$ 0.004 \\
250,000            & 0.066 $\pm$ 0.003 & 0.064 $\pm$ 0.003 & 0.052 $\pm$ 0.003 &           0.111 $\pm$ 0.004 & 0.104 $\pm$ 0.004
\end{tabular}
\end{table}

As we can see, an even mixture of Mean and Median seems to be the most effective approach of this set, although the difference with the mean and median isn't particularly large. Moreover, increasing the sample size by a factor of $5$ seems to correspond to roughly a linear decrease in error on order of $0.05$. On the other hand, the sign and skew approaches tend to lag in error, with the skew starting with larger error and but eventually overtaking the sign statistic.

However, all of these methods only look at the raw statistics without considering the error in the covariance eigenvectors. Using the mean, but adjusting for the error in the covariance eigenvalues we get a significantly improved results which are unfortunately a bit slower to calculate. We can compare them to the unadjusted best results below:
\begin{table}[H]
\centering
\begin{tabular}{l|l|l|l|l|l}
Samples  			  & MM Mix & Cov Adjusted \\\hline
2,000                  & 0.227 $\pm$ 0.003 & 0.193 $\pm$ 0.003  \\
10,000                 & 0.151 $\pm$ 0.004 & 0.109 $\pm$ 0.003  \\
50,000                 & 0.097 $\pm$ 0.004 & 0.066 $\pm$ 0.003  \\
250,000                & 0.052 $\pm$ 0.003 & 0.032 $\pm$ 0.002 
\end{tabular}
\end{table}

Looking just at the case of 50,000 samples and a Mean-Median mix, we can plot the histogram of errors giving Figure \ref{dim10}. There are two clear modes, one corresponding to being near the global minimum, while the other seems to correspond to selecting two of the eigenvectors incorrectly. Around 69\% of the tests avoid an incorrect selection, and therefore should be be near the global optima.

\begin{figure}[H]
\centerline{\includegraphics[scale=1.5]{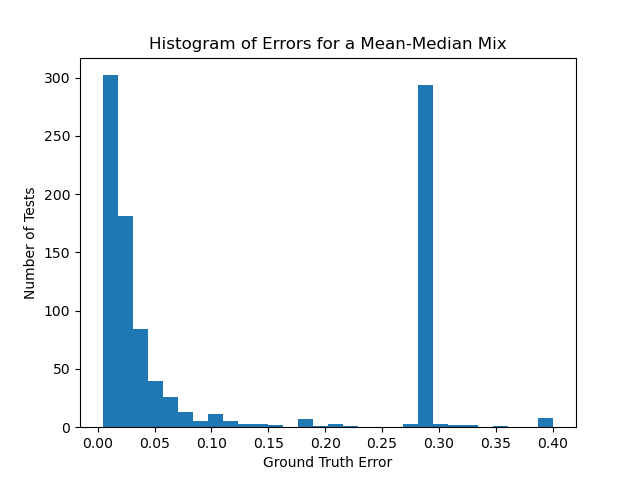}}
\caption{Error histogram for $N=50,000$ and $d=10$}
\label{dim10}
\end{figure} 

The effect of increasing the dimension is a bit more subtle. If the dimensions is low enough, this approach will usually find a transformation near the global optima, and so has a small error. As the dimension increases, the probability of choosing all the eigenvectors correctly significantly decreases, as can be seen in Figure \ref{dim22}.

\begin{figure}[H]
\centerline{\includegraphics[scale=0.48]{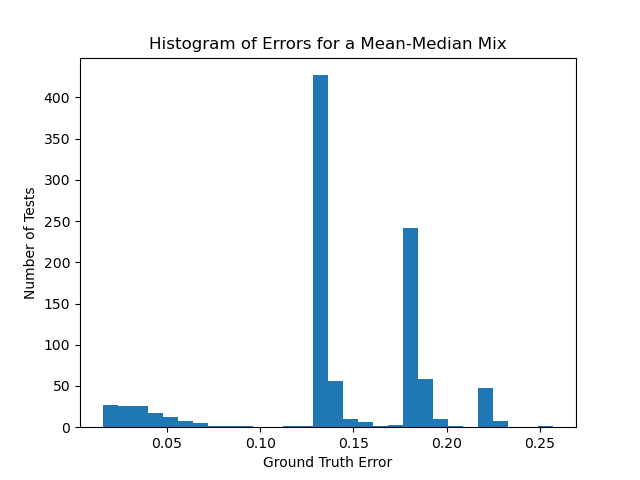}}
\caption{Error histogram for $N=50,000$ and $d=22$. Compare to Figure \ref{dim10}.}
\label{dim22}
\end{figure} 

However, even though the chance of being near the global optima decreases, the error eventually stabilizes as can be seen in the first two rows of the following table, which records the results of different dimensions and sample sizes using the covariant adjusted statistic (even clearer results happen for other statistics). This suggests that the bound in Theorem \ref{theoretical} which has $d$ dependence isn't sharp with respect to $d$ complexity, at least at low dimensions. For each pair of parameters, we produced $1,000$ synthetic datasets, and tested the algorithm on each, recording the mean of ground truth errors $\pm$ the standard errors.

\begin{table}[H]
\centering
\begin{tabular}{l|l|l|l|l|l}
N\textbackslash{}d & 6                 & 10                & 14                & 18                & 22                \\ \hline
2,000              & 0.127 $\pm$ 0.005 & 0.190 $\pm$ 0.003  & 0.212 $\pm$ 0.002 & 0.211 $\pm$ 0.001 & 0.208 $\pm$ 0.001 \\
10,000             & 0.064 $\pm$ 0.003 & 0.113 $\pm$ 0.003 & 0.146 $\pm$ 0.003 & 0.153 $\pm$ 0.002 & 0.157 $\pm$ 0.001 \\
50,000             & 0.033 $\pm$ 0.002 & 0.065 $\pm$ 0.003 & 0.087 $\pm$ 0.002 & 0.097 $\pm$ 0.002 & 0.110 $\pm$ 0.002  \\
250,000            & 0.017 $\pm$ 0.002 & 0.031 $\pm$ 0.002 & 0.045 $\pm$ 0.002 & 0.058 $\pm$ 0.002 & 0.069 $\pm$ 0.002
\end{tabular}
\end{table}

From this we can see that each $5$-fold increase in the sample size leads to a reduction of mean error of around $0.04-0.05$ for larger dimensions in this synthetic dataset model. We can also see that a large dataset in the hundreds of thousands of sample points may be necessary to produce very low error. Luckily, this is feasible for unsupervised image datasets, as long as standard incremental methods are used to compute the desired statistics. In fact, we can see that as the dimension gets very large, the error continues to decrease. For each entry in the following table we only did $10$ tests but otherwise using the same methodology as the previous table, but the standard errors remained small due to a significant lower variance:

\begin{table}[H]
\centering
\begin{tabular}{l|l|l|l}
N\textbackslash{}d & 100               & 200               & 400               \\ \hline
10,000             & 0.116 $\pm$ 0.002 & 0.092 $\pm$ 0.001 & 0.068 $\pm$ 0.000 \\
50,000             & 0.095 $\pm$ 0.002 & 0.079 $\pm$ 0.001 & 0.063 $\pm$ 0.000
\end{tabular}
\end{table}

\subsubsection{Model Selection}
So far, we have only considered the case where we actually know the dimension of the $\la=-1$ eigenspace in the symmetry matrix. This is unlikely to be the case in practice. Here we experimented with two of the approaches of model selection laid out in Section \ref{rankings}, in particular the (5 times repeated) 5-fold Cross Validation method and the full-dataset MMD method (we approximate it 5 times using batches of size $1,024$ and taking the mean of the results). For simplicity, we'll consider $d=10$ and $N=50,000$. We'll also use the Mean-Median mix statistic since it seems to perform the best.

Both of these approaches are quite slow, so we only use $50$ tests. The following tables gives the average results of these tests for choices of the bandwidth $h$.

\begin{table}[H]
\centering
\begin{tabular}{l|l|l}
                        & Mean Error $\pm$ SE & Correct Number of Swaps
\\\hline
Full Dataset ($h=1$) 		& 0.180 $\pm$ 0.020     & 38\% \\
Full Dataset ($h=3$)     	& 0.132 $\pm$ 0.021     & 62\% \\
Full Dataset ($h=6$)     	&  0.189 $\pm$ 0.024     & 46\% \\
Cross Validation ($h=1$) 	&  0.240 $\pm$ 0.019     & 25\% \\
Cross Validation ($h=3$) 	& 0.147 $\pm$ 0.022     & 54\%   \\
Cross Validation ($h=6$) 	&  0.153 $\pm$ 0.021     & 50\%                       
\end{tabular}
\end{table}

The increase in error is relatively small, and the majority of the errors in the methods occur when the ordering is already incorrect, so it seems that most of the difficulty comes from picking a good order. The difference in error between the full dataset and cross validation approaches seems relatively small, although selected an optimal bandwidth may require more work since the results seem more sensitive to this.

\subsubsection{Fine-Tuning}
The next important factor to consider is fine-tuning as described in Section \ref{finetuning-section}. In particular, for this experiment we will again assume we know the correct number of swapped eigenvectors (ie $5$). This process is significantly slower and more finicky than the previous methods, so we only considered $50$ synthetic datasets instead of $1000$, and used $100$ epochs. In fact, addition epochs seemed to often improve performance, but we kept it at $100$ to balance performance and accuracy. For Stochastic Gradient Descent we used a learning rate of $0.1$, and a momentum of $0.5$, with an additional penalty to push the change-of-basis matrix to remain orthogonal (although other experiments have indicated this may not be necessary). 

If there was numeric instability in the process and NaN values appeared, we would reduce the learning rate by a factor of $0.3$ and try again until it was able to finish a full $100$ epochs without issues. The results are plotted in Figure \ref{finetune-fig}.

\begin{figure}[H]
\centerline{\includegraphics[scale=0.7]{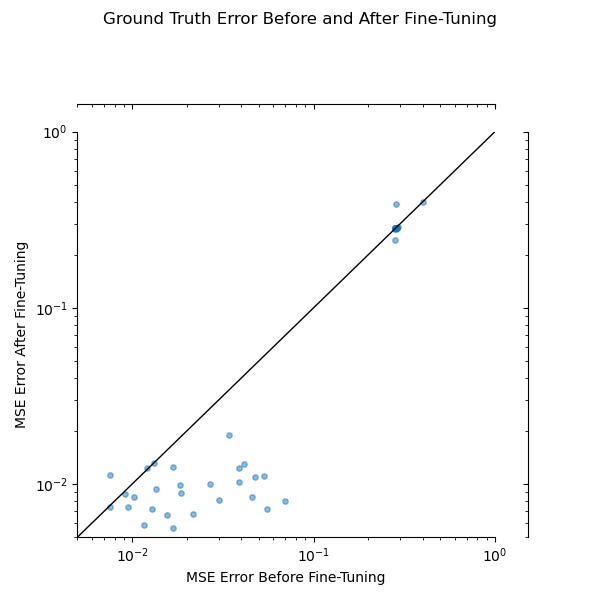}}
\caption{Error for $N=50,000$ and $d=10$ over $50$ tests after $100$ epochs of fine-tuning}
\label{finetune-fig}
\end{figure} 

This experiment seems to indicate that if the initial results are close enough to the global optima, then fine-tuning will have a large effect. In this case, fine-tuning almost always results in a mean error of roughly $0.01$ even if the error before fine-tuning was an order of magnitude larger. On the other hand, if the initial results are near one of the non-global optima, then little improvement should be expected from fine-tuning. Since image data is high dimensional, and we've seen that some incorrect selections are almost certain for high dimensions, this suggests that fine-tuning will have limited utility without additional techniques. In particular, the average error increased slightly due to a few outlier errors. 

\subsection{MNIST and EMNIST Datasets}
As a simple example of a more realistic dataset we will consider the MNIST and EMNIST datasets of handwritten numbers. The two primary reasons for this decision were that the dimension is smaller than most image datasets, and the dataset has some natural symmetries, for example the horizontal and vertical flips for labels like $0$ or $8$. However, after experimentation it turned out that these specific symmetries were not strong because handwritten letters tend to be tilted. 

In particular, we will consider a couple variants. First, we'll consider the semi-synethetic dataset where we include random horizontal flips. This allows us to see how well our techniques work with realistic data but with a known perfect symmetry. Second, we will consider subsets of the dataset with a fixed label, since these more restrictive sets are more likely to have strong symmetries. Finally, we will consider the full dataset, and attempt to apply our approach to produce data augmentation to improve supervised learning.

\subsubsection{Semi-Synthetic (Horizontal Flips)}
To analyze the effectiveness of our techniques we will consider a different metric from the synthetic datasets. Partly because the error in the real set appears to more often come from error in the covariance, and partly because the dataset is degenerate so it has no way of learning the full transformation. So we will directly compute the fraction of covariant eigenvectors are close to being eigenvectors of the horizontal flip transformation, and for those that are, we will determine which fraction are correctly swapped by our estimator transformation. 

For the former, for each eigenvector we will compute the angle between $v_{\la}$ and $T_{flip}v_{\la}$, and sort the results into three buckets. If the angle is less than 60 degree, or greater than 120 degrees, then it's close to an eigenvector of $T_{flip}$ with eigenvalue $+1$ or $-1$ respectively. Otherwise, we consider it to be in error.

For the latter, we take the half of (non-trivial) eigenvectors with the smallest statistic, then take those to be predicted unfixed vectors. We then compare this choice with where they were placed in the $+1$ and $-1$ buckets above, and calculate what fraction of vectors which are correctly swapped or fixed. 

To get a better sense of how dimension affects this, we resized the data to $4\times4$, $10 \times 10$, $16\times16$, $22 \times 22$ and $28 \times 28$ pixels. For half of each dataset, we did a horizontal flip, and left the other half alone. For the covariance eigenvectors, we got the following results:
\begin{table}[H]
\caption{Semi-Synthetic Covariance Eigenvector Accuracy}
\centering
\begin{tabular}{c|c|c|c|c|c}
      dataset \textbackslash{} dim & 4 & 10   & 16       & 22       & 28       \\
       \hline
MNIST  & 100\% & 80\% & 67\% & 57\% & 36\% \\
EMNIST & 100\% & 98\% & 89\% & 79\% & 62\%
\end{tabular}
\end{table}

There is a clear reduction in accuracy as the dimension increased. Moreover, there is a very clear improvement in accuracy from using the larger dataset. Nevertheless, even for the EMNIST dataset it appears that the accuracy for the full sized images isn't sufficient.

We get a similar result for selecting the correct eigenvectors. In low dimensions, we get very high accuracy, which quickly deteriorates as the dimension increases. 

\begin{table}[H]
\caption{MNIST Semi-Synthetic Eigenvector Selection Accuracy}
\centering
\begin{tabular}{c|c|c|c|c|c|c}
  dim \textbackslash{} stat & mean    & median  & mm mix  & sign    & corr adj & label-based \\
   \hline
4  & 88\% & 88\%   & 88\%    & 88\% & 88\% & 100\%         \\
10 & 71\% & 61\%   & 69\%    & 84\% & 74\% & 90\%        \\
16 & 68\% & 57\%   & 64\%    & 76\% & 68\% & 81\%        \\
22 & 64\% & 58\%   & 62\%    & 66\% & 64\% & 68\%         \\
28 & 63\% & 57\%   & 58\%    & 62\% & 61\% & 65\%
\end{tabular}
\end{table}

First, it should be noted that the label-based clustering approach makes use of the labels, so while it is more accurate, it is also less useful for semi-supervised learning. Putting that column aside, we get a surprising result. In the semi-synthetic data the sign statistic generally outperforms the other statistics, including the correlation adjusted mean. This is nearly the opposite of the synthetic data where the sign statistic did very poorly, while the correlation adjusted mean dominated. The difference in the performance of the sign statistic seems likely to be due to the disconnected nature of the synthetic data. In that case, the sign statistic may detect that the two sides are balanced, but not notice that one cluster is farther from $0$ than the other. 

Moreover, the ground truth error keeps increasing, unlike the corresponding synthetic datasets, suggesting that the synthetic datasets are missing some important properties of more realistic large dimensional datasets.

For the EMNIST dataset, we get similar results. However, unlike with the covariance eigenvectors, the improvement due to the larger dataset is significantly smaller in the larger dimensions.

\begin{table}[H]
\caption{EMNIST Semi-Synthetic Eigenvector Selection Accuracy}
\centering
\begin{tabular}{c|c|c|c|c|c|c}
   dim \textbackslash{} stat & mean  & median & mm mix & sign  & corr adj & label-based \\
     \hline
4  & 100\% & 100\%  & 100\%   & 100\% & 100\% & 100\%        \\
10 & 86\%  & 77\%   & 83\%    & 89\%  & 86\%  & 92\%        \\
16 & 78\%  & 65\%   & 74\%    & 78\%  & 76\%  & 83\%        \\
22 & 68\%  & 59\%   & 66\%    & 69\%  & 68\%  & 69\%        \\
28 & 67\%  & 62\%   & 65\%    & 69\%  & 68\%  & 73\%   
\end{tabular}
\end{table}

Model selection seems to be tricky in this case. Even in the $10 \times 10$ case, the dimensions that are nearly trivial can produce serious problems, as seen in Figure \ref{SS_MDD_Error}. There is a noticeable dip at $50$ as there should be (with the actual local minimum at $51$), but it is relatively shallow, and the first couple dimensions have lower error because their variance is so small.

\begin{figure}[H]
\centerline{\includegraphics[scale=0.75]{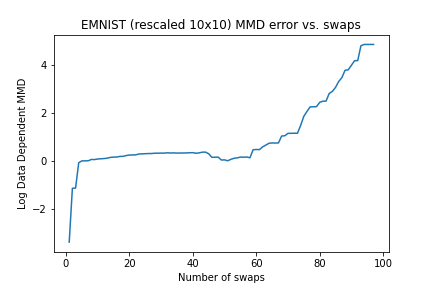}}
\caption{EMNIST Data-based MMD Errors using Sign-Based Ranking. $\sigma^2=5$, $\la=0.03$, and using 25\% of the data as a validation set to do model selection}
\label{SS_MDD_Error}
\end{figure}

The dip seems to be more clear in lower dimensions, and can disappear in higher dimensions, suggesting we may need to use alternative methods in order to perform model selection when the dimension is large.

\subsubsection{Fixed Label}

Next we consider the unmodified (E)MNIST dataset, but restricted to a specific label where symmetries should be easier to find. The results of the semi-synthetic test suggest that the dimension of the full $28 \times 28 = 784$ dimensional space is probably too large for accurate selections without an even larger dataset, or incorporating other priors like spacial continuity. Therefore, we consider the $10 \times 10$ case where it is possible to see what's happening visually while still keeping a relatively small dimension.

Secondly, as with most unsupervised learning methods, the model is rarely perfect, so we may need to add more flexibility to model selection methods. In particular, it is sometimes the case that very low covariance dimensions contribute very little to the scoring, and so swapping them is roughly the same as doing nothing, causing the model selection to erroneously just pick one of those vectors to negate. 

If we use the spherical MMD, and just look at the EMNIST dataset of points labeled $0$ we get the following graph of MMD versus the number of eigenvectors negated:

\begin{figure}[H]
\centerline{\includegraphics[scale=0.75]{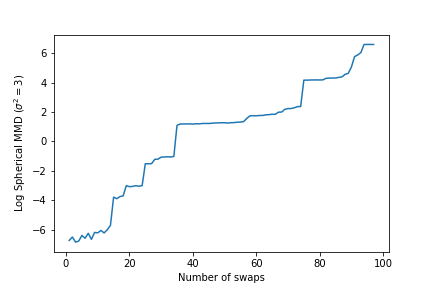}}
\caption{EMNIST MMD Errors using Sign-Based Ranking}
\label{0_spherical}
\end{figure}

There are no large dips, and instead the error is dominated by places where eigenvectors with large eigenvalues are added.  We can address this somewhat by rescaling the MMD so that dimensions with larger variances are adjusted down as explained in subsection \ref{rankings}.
\begin{figure}[H]
\centerline{\includegraphics[scale=0.75]{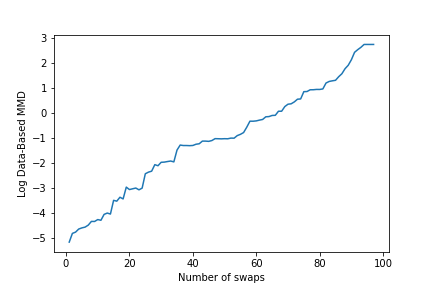}}
\caption{EMNIST MMD Errors using Sign-Based Ranking. $\sigma^2=5$ and $\la=0.03$}
\label{0_non_spherical}
\end{figure}

This greatly reduces the size of the large jumps, but we still don't get any significant dips. Part of this is likely due to the innate bias in the ranking method. Each time we add a dimension to swap, the transformation necessarily respects the mean and standard deviation less, so reductions in MMD need to come from reductions in larger moments. However, especially as the bandwidth increases, the MMD focuses more on the lower order moments, making it very difficult for a drop to appear in the graph. Unfortunately, setting the bandwidth to be small is often infeasible because the scoring becomes too unstable.

We can graph the effect of the learned transformation on a sample of the dataset, with the top row being the original images, and each subsequent row involving more and more change.

\begin{figure}[H]
\centerline{\includegraphics[scale=0.60]{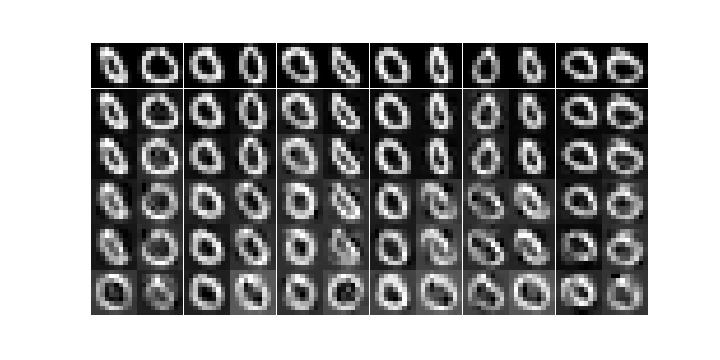}}
\caption{Potential transformations of $0$. From top to bottom $0$, $13$, $23$, $45$, $65$, and $89$ swaps}
\label{0s}
\end{figure}

We see that the first $3$ rows there is very little difference, but that for the $4$th and $5$th we start to see some changes. In particular, some of columns (especially the $4$th, $9$th and $10$th) seem to switch between roughly vertical, and slanted top-left to bottom-right. We also see some errors begin to appear, in particular in the $8$th column. We see a similar result for images labeled $1$

\begin{figure}[H]
\centerline{\includegraphics[scale=0.60]{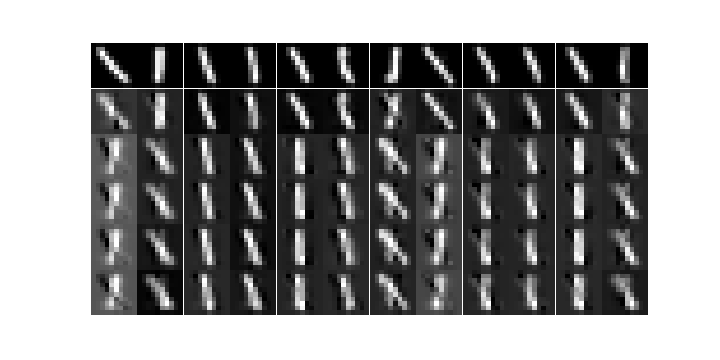}}
\caption{Potential transformations of $1$. From top to bottom $0$, $20$, $40$, $50$, $60$, and $80$ swaps}
\label{1s}
\end{figure}

On the other hand, some labeled classes like $4$ seem to be too complicated for this method, and produce incoherent transformations. It seems to be again try to switch from straight to slanted, but in the process picks up a lot of blurring. 

\begin{figure}[H]
\centerline{\includegraphics[scale=0.65]{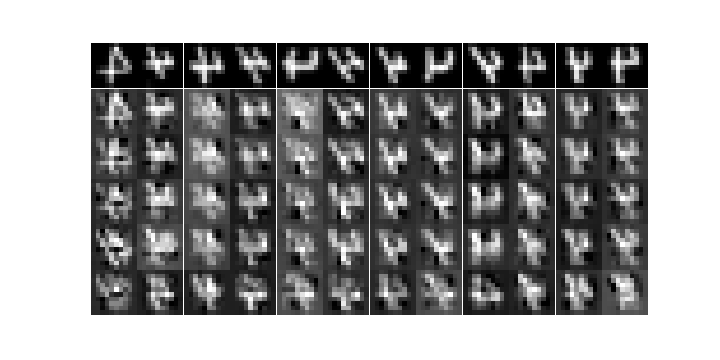}}
\caption{Potential transformations of $4$. From top to bottom $0$, $20$, $40$, $50$, $60$, and $80$ swaps}
\label{4s}
\end{figure}

\subsubsection{Full Dataset}

Ideally we hope to be able to find a useful symmetry that works for all labels simultaneously, because this could allow us to use it to do semi-supervised data augmentation. Unfortunately, in this case even only a couple of swapped dimensions quickly leads to large errors. This is somewhat unsurprising given that this process didn't work with some of the individual labels.

\begin{figure}[H]
\centerline{\includegraphics[scale=0.85]{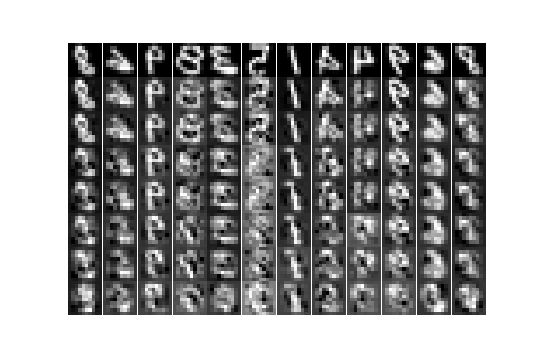}}
\caption{Potential transformations of full digits EMNIST dataset. From top to bottom $0$, $5$, $10$, $15$, $20$, $40$, $60$ and $80$ swaps}
\label{alls}
\end{figure}

Clearly, additional work needs to be done to make this functional for more complicated datasets. Since the transformation seems to make some of the images change the label (for example in column $1$), it seems to be necessary to force the symmetry to respect each single-label dataset individually. Moreover, since the algorithm has trouble with some labels, it's probably also necessary to allow more flexible transformations to be learned. 

\section{Further Directions}

\subsection{More-Global Scoring}
One of the key limitations is that scoring of dimensions is primarily local. We obtain a ranking of dimensions, then just find a cut-off point. However, if there are multiple linear symmetries, or if we accidentally incorrectly rank some of the dimensions, there may be no good symmetry that can be obtained by selecting the top ranked dimensions. Alternative approaches such as greedily selected dimensions has the potential to be more robust to changes in the model assumptions.

\subsection{Non-Linear Symmetries}
The current paper focused on the case of linear symmetries because they are relatively easy to work with, and because many symmetries in images should be locally linear. However, in order to achieve the potential of this goal, it will be necessary to learn symmetries which are not linear or affine, since few symmetries are likely to be globally linear. 

There are a number of ways we could try to leverage this work to the non-linear setting. If the desired symmetry is close to linear, we could initialize a more flexible model at the linear transformation and then fine tune it. Alternatively, we could learn a symmetry within the encoding of a non-linear generative model.

\subsection{Higher Dimensional Symmetries}
The methods in the current paper had difficulty when the dimension $d$ grew much larger than $100$. Ideally, we would like to be able work with larger images of order $d \approx 300 \times 300 \approx \num{100000}$. Beyond the accuracy issues, this will start to run into computational issues since we'd need to compute the covariance which would be $\num{100000} \times \num{100000} = \num{10000000000}$ dimensional, which starts to become impractical. 

The most direct solution is to use dimension reduction techniques. In particular, we never used the extra structure arising from translation and dilation symmetry. Enforcing these strictly would restrict the allowable linear symmetries to reflections and rotations of the image, but weaker constraints like penalties on discontinuity or failure to respect rescaling may greatly improve performance in higher dimensions. 

\subsection{Higher Order Symmetries}
In this paper, we restricted our attention to the case where our models had only order $2$ symmetries, so applying the transformation twice gave back the identity transformation. This restriction was justified by the fact that we can identify datasets which might have higher order symmetries by looking for eigenvalues that are approximately equal. An important extension would be to develop techniques to better handle cases where some eigenvalues are equal. If there are too many of these, we will start to fall into the intractable setting, but dealing with a smaller number of these equalities should be feasible. One would need to identify which eigenvalues are likely equal, perhaps determine a good basis for the corresponding eigenspace, then use this to determine if the eigenspace has higher order or even continuous symmetries.

\printbibliography[heading=bibintoc, title={References}]

\end{document}